\documentclass[12pt]{article}

\usepackage[margin=1in]{geometry} 
\usepackage{amsmath,amsthm,amssymb}
\usepackage{graphicx}
\usepackage{bbm} 
\usepackage{color} 
\usepackage{multirow}

\usepackage[utf8]{inputenc}
\usepackage[english]{babel}

\usepackage{chngcntr}
 
\newtheorem{theorem}{Theorem}
\newtheorem{corollary}{Corollary}[theorem]
\newtheorem{lemma}[theorem]{Lemma}
\theoremstyle{definition}
\newtheorem{definition}{Definition}[section]
\theoremstyle{assumption}
\newtheorem{assumption}{Assumption}[section]
\theoremstyle{remark}


\newcommand{\norm}[1]{\left\lVert#1\right\rVert}
\newcommand{\abs}[1]{|#1|}
\newcommand{\R}{\mathbb{R}}
\newcommand{\E}{\mathbb{E}}
\newcommand{\X}{\mathbf{X}}
\newcommand{\y}{\mathbf{y}}

\newcommand{\VIC}{\textit{VIC}}

\newcommand{\mr}{\textit{mr}}
\newcommand{\Var}{\textit{Var}}
\newcommand{\Cov}{\textit{Cov}}

\newcommand{\F}{\mathcal{F}}
\newcommand{\Rash}{\mathcal{R}}

\newcommand{\MRel}{\textit{MR}}

\setlength{\parskip}{0.4em}

\numberwithin{equation}{section}

\usepackage{natbib}
 \bibpunct[, ]{(}{)}{,}{a}{}{,}%

\title{Variable Importance Clouds: A Way to Explore Variable Importance for the Set of Good Models}

\begin{document}

\begin{titlepage}
\title{Variable Importance Clouds: A Way to Explore Variable Importance for the Set of Good Models}
\author{Jiayun Dong \thanks{Department of Economics, Duke University, Durham, NC 27708.} \and Cynthia Rudin \thanks{Departments of Computer Science, Electrical and Computer Engineering, and Statistical Science, Duke University, NC 27708.}}
\date{\today}
\maketitle

\begin{abstract}
\noindent  Variable importance is central to scientific studies, including the social sciences and causal inference, healthcare, and other domains. However, current notions of variable importance are often tied to a specific predictive model. This is problematic: what if there were multiple well-performing predictive models, and a specific variable is important to some of them and not to others? In that case, we may not be able to tell from a single well-performing model whether a variable is always important in predicting the outcome. Rather than depending on variable importance for a single predictive model, we would like to explore variable importance for all approximately-equally-accurate predictive models. This work introduces the concept of a \textit{variable importance cloud}, which maps every variable to its importance for every good predictive model. We show properties of the variable importance cloud and draw connections to other areas of statistics. We introduce \textit{variable importance diagrams} as a projection of the variable importance cloud into two dimensions for visualization purposes. Experiments with criminal justice, marketing data, and image classification tasks illustrate how variables can change dramatically in importance for approximately-equally-accurate predictive models.\\
\vspace{0in}\\
\noindent\textbf{Keywords:} variable importance, Rashomon set, interpretable machine learning\\

\bigskip
\end{abstract}
\setcounter{page}{0}
\thispagestyle{empty}
\end{titlepage}

\section{Introduction}
In predictive modeling, how do we know whether a feature is actually important? If we find an accurate predictive model that depends heavily on a feature, it does not necessarily mean that the feature is always important for good models. On the contrary, what if there is another equally accurate model that does not depend on the feature at all? Perhaps in order to answer this question, we need a holistic view of variable importance, that includes not just the importance of a variable to a single model, but to any accurate model. {\em Variable importance clouds}, which we introduce in this work, aims to provide a lens into the secret life of the class of almost-equally-accurate predictive models.

Ideally we would like to obtain a more complete understanding of variable importance for the set of models that predict almost equally well. This set of almost-equally-accurate predictive models is called the {\em Rashomon set}; it is the set of models with training loss below a threshold. The term Rashomon set comes from Breiman's {\em Rashomon effect} \cite{Breimanetal2001}, which is the notion that there could be many good explanations for any given phenomenon. \cite{Breiman2001} also defined a useful notion of {\em variable importance}; namely the increase in loss that occurs when a variable is purposely scrambled (randomly permuted). Unfortunately, however, there is something fundamentally incomplete about considering these two quantities separately: if we look at variable importance only for a single model, we miss the potentially more important question of what the variable importance could be for another different but equally-accurate model. A variable importance cloud (VIC) is precisely the joint set of variable importance values for all models in the Rashomon set. 

Specifically, we define a vector for a single predictive model, each element representing the dependence of the model on a feature. The VIC is the set of such vectors for all models in the Rashomon set. The VIC thus reveals the importance of a feature in the context of the importance of other features for all good models. For example, it may reveal that a feature is important only when another feature is not important, which may happen when these features are highly correlated. Understanding the VIC helps interpret predictive models and provides a context for model selection. This type of analysis provides a deeper understanding of variable importance, going beyond single models and now encompassing the set of every good model. In this paper, we analyze the VIC for linear models, and extend the analysis to some of the nonlinear problems including logistic regression, decision trees, and deep learning.

When there are many features that could be potentially important within at least one good predictive model, the VIC becomes a subset of a high dimensional space. To facilitate understanding of the VIC, we propose a visualization tool called the {\em variable importance diagram} (VID). It is a collection of 2d projections of the VIC onto the space spanned by the importance of a pair of features. The VID offers graphical information about the magnitude of variable importance measures, the bounds, and the relation of variable importance for each pair of features. An upward-sloping projection suggests that a feature is importance only when the other feature is also important, and vice versa for a downward-sloping projection. We provide examples of VIDs in the context of concrete applications, and illustrate how the VID facilitates model interpretation and selection.

The remainder of the paper is organized as follows. In Section 2, we introduce definitions and use linear model as an example to build basic understanding of the VIC/VID. In Section 3, we introduce the general approach of VIC/VID analysis for nonlinear problems, including logistic regression models and decision trees. In Section 4, we describe the use cases of the VIC/VID framework. We demonstrate our framework with concrete examples in Section 5, which includes three experiments. In Section 5.1, we study the Propublica dataset for criminal recidivism prediction and demonstrate the VIC/VID analysis for both logistic regression and decision trees. We move onto an in-vehicle coupon recommendation dataset and illustrate the trade-off between accuracy and variable importance in Section 5.2. We study an image classification problem based on VGG16 in Section 5.3.  We discuss related work in Section 6. As far as we know, there is no other work that aims to visualize the set of variable importance values for the \textit{set} of approximately-equally-good predictive models for a given problem. Instead, past work has mainly defined variable importance for \textit{single} predictive models.

\section{Preliminaries}

For a vector $v \in \R^p$, we denote its $j^{th}$ element by $v_j$ and all elements except for the $j^{th}$ one by $v_{\backslash j}$. For a matrix $M$, we denote its transpose by $M^T$, $i^{th}$ row by $M_{[i,\cdot]}$, and  $j^{th}$ column by $M_{[\cdot,j]}$.

Let $(X, Y) \in \R^{p+1}$ be a random vector of length $p+1$, with $p$ being a positive integer, where $X$ is the vector of $p$ covariate variables (referred to as features) and $Y$ is the outcome variable. Our dataset is an $n-by-(p+1)$ matrix, $(\X, \y) = (x_i, y_i)_{i=1}^n$, where each row $(x_i, y_i)$ is an i.i.d. realization of the random vector $(X, Y)$.

Let $f: \R^p \rightarrow \R$ be a predictive model, and $\F \subset \{f | f: \R^p \rightarrow \R\}$ be the class of predictive models we consider. For a given model $f \in \F$ and an observation $(x, y) \in \R^{p+1}$, let $l(f; x, y)$ be the loss function. The expected loss and empirical loss of model $f$ are defined by $L^{exp}(f; X, Y) = \E[l(f; X,Y)]$ and $L^{emp}(f; (\X, \y)) = \sum_{i=1}^n l(f; x_i, y_i)$. We sometimes drop the superscript or the reliance on the data when the context is clear. We consider different classes of predictive models and loss functions in the paper, including the squared loss, logistic loss, and 0-1 loss.

\subsection{Rashomon Set}
Fix a predictive model $f^* \in \F$ as a benchmark. A model $f \in \F$ is ``good'' if its loss does not exceed the loss of $f^*$ by a factor $\epsilon > 0$. A Rashomon set $\Rash \subset \F$ is defined to be the set of all good models in the class $\F$. In most cases, we select $f^*$ to be the best model within the set $\F$ that minimizes the loss, and we define $f^*$ this way in what follows.

\begin{definition}[Rashomon Set]\label{Rashomon}
Given a model class $\F$, a benchmark model $f^*\in \F$, and $\epsilon > 0$, the {\em Rashomon set} is defined as $$\Rash(\epsilon, f^* ,\F) = \{f \in \F | L(f) \leq (1 + \epsilon) L(f^*) \}.$$
\end{definition}
Note that the Rashomon set $\Rash(\epsilon, f^* ,\F)$ also implicitly depends on the loss function and the dataset.

\subsection{Model Reliance}
For a given model $f \in \F$, we want to measure the degree to which its predictive power relies on a particular variable $j$, where $j = 1, \cdots, p.$ We will use a measure of variable importance that is similar to that used by random forest (\cite{Breiman2001}, see also \cite{FRD2018} for terminology).
Let $(\bar{X}, \bar{Y})$ be another random vector that is independent of and identically distributed to $(X, Y)$. We replace the $X_j$ with $\bar{X}_j$, which gives us a new vector denoted by $([X_{\backslash j}, \bar{X}_j], Y)$.

Intuitively, $L(f; [X_{\backslash j}, \bar{X}_j], Y)$ should be larger than $L(f; X, Y)$, since we have broken the correlation between feature $X_j$ and outcome $Y$. The change in loss due to replacing feature $j$ with a new random draw for feature $j$ is called model reliance. Formally:

\begin{definition}[Model Reliance]\label{MR_def}
The (population) \emph{reliance of model $f$ on variable $j$} is given by either the ratio $$\mr^{\text{ratio}}_j(f) = \frac{L(f; [X_{\backslash j}, \bar{X}_j], Y)}{L(f; X, Y)},$$ or the difference $\mr^{\text{diff}}_j(f) = L(f; [X_{\backslash j}, \bar{X}_j], Y) - L(f; X, Y),$ depending on the specific application. 
\end{definition}

Empirical versions of these quantities can be defined with respect to the empirical dataset and loss function. Larger $\mr_j$ indicates greater reliance on feature $X_j$. We sometimes drop the superscript when the context is clear. 

From here, we diverge from existing work that considers only variable importance of a single function. Let us now define the \textit{model reliance function}, which specifies the importance of each feature to a predictive model.

\begin{definition}[Model Reliance Function]\label{MR}
The function $\MRel: \F \rightarrow \R^p$ maps a model to a vector of its reliance on all features: 
$$\MRel(f) = (\mr_1(f), \cdots, \mr_p(f)).$$ We refer to $\MRel(f)$ as the \emph{model reliance vector} of model $f$.
\end{definition}

\subsection{Variable Importance Cloud and Diagram}
For a single model $f \in \F$, we compute its model reliance vector $\MRel(f)$, which shows how important the features are to the \emph{single} model. But usually, there is no clear reason to choose one model over another equally-accurate model. Thus, model reliance hides how important a variable \textit{could be}. Accordingly, it hides the \textit{joint} importance of multiple variables. Variable Importance Clouds explicitly characterize this joint importance of multiple variables. The \emph{Variable importance cloud} (VIC) consists of the set of model reliance vectors for all predictive models in the Rashomon set $\Rash$.
\begin{definition}[VIC]\label{VIC}
The \emph{Variable Importance Cloud} of the Rashomon set $\Rash = \Rash(\epsilon, f^*, \F)$ is given by $\VIC(\Rash) = \{\MRel(f): f \in \Rash\}.$
\end{definition}

The VIC is a set in the $p$-dimensional space. We project it onto lower dimensional spaces for visualization. We construct a collection of such projections, referred to as the \emph{Variable Importance Diagram} (VID). Both the VIC and VID embody rich information. This argument will be illustrated with concrete applications later.

\subsection{Rashomon Set and VIC for Ridge Regression Models}

Fix a random vector $(X,Y)$. For a linear regression model $f_\beta \in \F_{lm}$, the expected ridge regression loss is given by
\begin{align*}
L(f_\beta) = & \E[(Y-X^T \beta)^2 + c \norm{\beta}^2] \\ 
= & \E[Y^2] - 2\E[YX^T]\beta + \beta^T \E[XX^T + cI]\beta.
\end{align*}
Given a benchmark model $f_{\beta^*}\in \F_{lm}$, a factor $\epsilon > 0$, following Definition \ref{Rashomon}, the {\em Rashomon set for linear models} $\Rash_{lm}$ is defined as 
$$\Rash_{lm}(\epsilon, f_{\beta^*}, \F_{lm}) = \{f \in \F_{lm} | L(f) \leq (1 + \epsilon)L(f_{\beta^*}) \}.$$
That is, a linear model $f_\beta$ is in the Rashomon set if it satisfies
\begin{equation}\label{Rash_lm}
\beta^T \E[XX^T+ cI]\beta - 2\E[YX^T]\beta + \E[Y^2] \leq (1 + \epsilon) L(f_{\beta^*}).
\end{equation}

Observe that if the random vector $(X,Y)$ is normalized so that the expectation is zero, then $\E(XX^T) = \textit{Var}(X)$ captures the covariance structure among the features, and $\E(YX^T) = \textit{Cov}(Y, X)$ captures the covariance between the outcome and the features. Therefore, the Rashomon set for ridge regression models can be expressed as a function of only these covariances.

Model reliance function $\MRel$ in Definition \ref{MR} turns out to have a specific formula for ridge regression models, given by the lemma below, which is a generalization of Theorem 2 of \cite{FRD2018} to ridge regression.

\begin{lemma}\label{MR_lm}
Given a random vector $(X,Y)$ and the least squares loss function $L$, for $j = 1, 2, \cdots, p$,
\begin{equation}\label{mr_lm}
\mr_j^{\text{diff}}(f_\beta) = 2\Cov(Y, X_j)\beta_j - 2 \beta_{\backslash j}^T \Cov(X_{\backslash j}, X_j)\beta_j,
\end{equation}
As a result, the model reliance function for linear models becomes $$\MRel(f_\beta) = (\mr_1(f_\beta), \cdots, \mr_p(f_\beta)).$$
\end{lemma}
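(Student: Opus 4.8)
The plan is to expand the squared-error loss directly and exploit the two defining properties of $\bar X_j$: it is independent of $(X,Y)$ and identically distributed to $X_j$. First I would observe that the ridge penalty $c\norm{\beta}^2$ depends only on $\beta$ and not on which realization of the $j$-th feature is plugged in, so it cancels in the difference $\mr_j^{\text{diff}}(f_\beta) = L(f_\beta; [X_{\backslash j}, \bar X_j], Y) - L(f_\beta; X, Y)$. Hence it suffices to work with the pure least-squares part $\E[(Y - X^T\beta)^2]$, which already explains why no $c$ appears in \eqref{mr_lm}.

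Next I would split the linear predictor as $X^T\beta = X_{\backslash j}^T\beta_{\backslash j} + X_j\beta_j$ and abbreviate $A := Y - X_{\backslash j}^T\beta_{\backslash j}$, so that (up to the penalty) $L(f_\beta; X, Y) = \E[(A - X_j\beta_j)^2]$ while $L(f_\beta; [X_{\backslash j}, \bar X_j], Y) = \E[(A - \bar X_j\beta_j)^2]$. Expanding both squares, the $\E[A^2]$ terms agree trivially and the $\beta_j^2$ terms agree because $\bar X_j$ has the same distribution as $X_j$, giving $\E[\bar X_j^2] = \E[X_j^2]$. What remains is
\[
\mr_j^{\text{diff}}(f_\beta) = 2\beta_j\bigl(\E[A X_j] - \E[A\bar X_j]\bigr).
\]
Independence of $\bar X_j$ from $(X,Y)$ together with the zero-mean normalization makes $\E[A\bar X_j] = \E[A]\,\E[\bar X_j] = 0$, so $\mr_j^{\text{diff}}(f_\beta) = 2\beta_j\,\E[A X_j]$.

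Finally I would expand $\E[A X_j] = \E[Y X_j] - \beta_{\backslash j}^T\E[X_{\backslash j}X_j]$ and, using the zero-mean normalization once more, identify $\E[Y X_j] = \Cov(Y, X_j)$ and $\E[X_{\backslash j}X_j] = \Cov(X_{\backslash j}, X_j)$, which is exactly \eqref{mr_lm}; the formula for $\MRel(f_\beta)$ then follows by stacking the coordinates as in Definition \ref{MR}. There is no serious obstacle here — the argument is a direct second-moment computation — but the point requiring care is the bookkeeping of the replacement: $\bar X_j$ substitutes \emph{only} the $j$-th coordinate, so $X_{\backslash j}^T\beta_{\backslash j}$ is genuinely common to both losses, and the independence and identical-distribution properties must be invoked for $\bar X_j$ alone while the covariate structure buried in $A$ is left untouched. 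I would also flag explicitly each place the zero-mean normalization is used, since without it one obtains raw second moments rather than the covariances appearing in the stated formula.
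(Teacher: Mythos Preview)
Your argument is correct: the ridge penalty cancels, the identical-distribution assumption kills the $\beta_j^2$ term, and independence plus the zero-mean normalization kills $\E[A\bar X_j]$, leaving exactly the claimed formula. The paper itself does not supply a proof of this lemma---it simply states the result as a generalization of Theorem~2 of \cite{FRD2018} to ridge regression---so there is no in-paper argument to compare against; your direct second-moment computation is the standard route and would serve as the omitted proof. Your flag about the zero-mean normalization is well placed: the paper adopts that convention (see the discussion after Equation~\eqref{Rash_lm} and the setup in Section~2.8), and without it one would indeed get raw cross-moments rather than the covariances in \eqref{mr_lm}.
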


Note that the function $\MRel$ is non-linear in $\beta$. With a slight abuse of notation, we define $\MRel^{-1}$ as the inverse function that maps variable importance vectors to coefficients of a linear model (rather than the model itself). That is, $\MRel^{-1}(\mr_1(f_\beta), \cdots, \mr_p(f_\beta)) = \beta$ instead of $f_\beta$. We assume the existence of the inverse function $\MRel^{-1}$.

With the expressions for both the Rashomon set (Equation \ref{Rash_lm}) and the model reliance function (Equation \ref{mr_lm}), we can characterize the VIC for linear models. 

\begin{theorem}[VIC for Linear Models]\label{VIC_lm_thm}
Fix a benchmark model $f_{\beta^*}\in \F_{lm}$, and a factor $\epsilon > 0$. Let $\VIC = \VIC(\Rash_{lm}(\epsilon, f_{\beta^*}, \F_{lm}))$. Then a vector $\mr \in \VIC$ if it satisfies 
\begin{align}
\MRel^{-1}(\mr)^T \E[XX^T+ cI]\MRel^{-1}(\mr) - 2\E[YX^T]\MRel^{-1}(\mr) + \E[Y^2] \leq (1 + \epsilon)L(f^*).\label{VIC_lm}
\end{align}
\end{theorem}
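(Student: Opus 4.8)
The plan is to unwind the definition of the VIC and push the Rashomon-set inequality through the change of variables $\beta \mapsto \MRel(f_\beta)$. By Definition~\ref{VIC}, a vector $\mr$ lies in $\VIC = \VIC(\Rash_{lm}(\epsilon, f_{\beta^*}, \F_{lm}))$ precisely when there exists a model $f_\beta \in \Rash_{lm}(\epsilon, f_{\beta^*}, \F_{lm})$ with $\MRel(f_\beta) = \mr$. So the statement reduces to rewriting membership in $\Rash_{lm}$ in terms of $\mr$ rather than $\beta$, and then reading off that the resulting condition is exactly~\eqref{VIC_lm}.

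First I would recall that, by the definition of the Rashomon set for linear models, $f_\beta \in \Rash_{lm}$ holds exactly when $\beta$ satisfies the quadratic inequality~\eqref{Rash_lm}, namely $\beta^T \E[XX^T + cI]\beta - 2\E[YX^T]\beta + \E[Y^2] \le (1+\epsilon) L(f_{\beta^*})$. Next I would invoke the assumed existence of the inverse map $\MRel^{-1}$: since $\MRel^{-1}$ sends a reliance vector to the coefficient vector of the corresponding linear model, the condition $\MRel(f_\beta) = \mr$ is equivalent to $\beta = \MRel^{-1}(\mr)$. Substituting this identity into~\eqref{Rash_lm} produces exactly inequality~\eqref{VIC_lm}, which gives the ``if'' direction stated in the theorem; reading the equivalences backward also yields the converse, since any $\mr$ satisfying~\eqref{VIC_lm} is the image under $\MRel$ of the model with coefficient vector $\MRel^{-1}(\mr)$, and that model lies in $\Rash_{lm}$ by~\eqref{Rash_lm}. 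Thus the chain $\mr \in \VIC \iff \exists\,\beta:\ f_\beta \in \Rash_{lm} \text{ and } \MRel(f_\beta) = \mr \iff \MRel^{-1}(\mr) \text{ satisfies~\eqref{Rash_lm}} \iff \eqref{VIC_lm}$ closes the argument.

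The only real subtlety — and the step I would treat most carefully — is the use of $\MRel^{-1}$. Because $\MRel$ is nonlinear in $\beta$ (indeed quadratic, by Lemma~\ref{MR_lm}), it need not be globally injective, so I would make explicit that we work on a domain where the assumed inverse is well defined, and that writing ``$\beta = \MRel^{-1}(\mr)$'' is shorthand for ``$\beta$ is the coefficient vector of a model $f_\beta$ with $\MRel(f_\beta) = \mr$.'' With that convention in place, everything else is a direct substitution of $\MRel^{-1}(\mr)$ for $\beta$ into the quadratic form appearing in~\eqref{Rash_lm}, so no further computation is needed.
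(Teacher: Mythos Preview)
Your proposal is correct and matches the paper's approach: the theorem is presented there as an immediate consequence of combining the Rashomon-set inequality~\eqref{Rash_lm} with the assumed inverse $\MRel^{-1}$, and the paper does not supply a separate proof beyond that. Your explicit unwinding of the definitions and the substitution $\beta = \MRel^{-1}(\mr)$ is exactly the intended argument, and your remark about the domain on which $\MRel^{-1}$ is well defined is a reasonable caveat that the paper handles only by the blanket assumption preceding the theorem.
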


The theorem suggests that the VIC for linear models depends solely on the covariance structure of the random vector $[X,Y]$, which includes $\E[XX^T]$, $\E[Y^2]$, and $\E[YX^T]$. (The function $\MRel^{-1}$ also depends solely on the covariance structure of $[X,Y]$.)

\subsection{Scale of Data}

In this subsection, we set the regularization parameter $c$ to be 0. We are interested in how the VIC is affected by the scale of our data $[X, Y]$. We prove that the VIC is scale-invariant in features $X$. Rescaling the outcome variable $Y$ does affect the VIC, as it should.

\begin{corollary}[Scale of VIC]\label{scale}
Let $\tilde{X_i} = s_i X_i$ with $s_i > 0$ for all $i = 1, \cdots, p$, and $\tilde{Y} = t Y$. It follows that 
$$\mr \in \VIC(X, Y) \text{ if and only if } \, t^2 \cdot \mr \in \VIC(\tilde{X}, \tilde{Y}),$$ where $\VIC(X,Y)$ denotes the VIC with respect to the Rashomon set $\Rash(\epsilon, f_\beta^*, \F_{lm}; X, Y)$ with $\epsilon > 0$ and $f_{\beta^*}$ being the model that minimizes the expected loss with respect to $[X,Y]$, and $\VIC(\tilde{X}, \tilde{Y})$ is defined in the same way for the scaled variable $[\tilde{X}, \tilde{Y}]$.
\end{corollary}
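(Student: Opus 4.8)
The plan is to build an explicit bijection between the two Rashomon sets, viewed as sets of coefficient vectors, that multiplies every model reliance vector by $t^2$; the corollary then follows immediately from the definition of the VIC. Concretely, let $S = \mathrm{diag}(s_1,\dots,s_p)$, so that $\tilde X = SX$ and $\tilde Y = tY$, and define $\Phi(\beta) = t\,S^{-1}\beta$, i.e.\ $\Phi(\beta)_j = t\beta_j/s_j$. Since each $s_j > 0$ and $t \neq 0$, $\Phi$ is a bijection of $\R^p$. I will show (i) $\Phi$ maps $\Rash_{lm}(\epsilon, f_{\beta^*}, \F_{lm}; X, Y)$ onto $\Rash_{lm}(\epsilon, f_{\tilde\beta^*}, \F_{lm}; \tilde X, \tilde Y)$, and (ii) $\MRel\big(f_{\Phi(\beta)}\big) = t^2\,\MRel(f_\beta)$ for every $\beta$, where in (ii) the model reliance on the left is computed with respect to $(\tilde X,\tilde Y)$.

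For (i), the key observation is that for any $\beta$ the prediction of $f_{\Phi(\beta)}$ on the rescaled features equals $\tilde X^T \Phi(\beta) = (SX)^T(tS^{-1}\beta) = t\,X^T\beta$, so with $c=0$,
\[
L\big(f_{\Phi(\beta)}; \tilde X, \tilde Y\big) = \E\big[(tY - tX^T\beta)^2\big] = t^2\, L(f_\beta; X, Y).
\]
Equivalently, the loss on the rescaled data as a function of a coefficient vector $\gamma$ equals $t^2\,L\big(f_{\Phi^{-1}(\gamma)}; X, Y\big)$; since $t^2 > 0$ and $\Phi$ is a bijection, this is minimized at $\gamma = \Phi(\beta^*)$, which identifies $\tilde\beta^* = \Phi(\beta^*)$. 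Plugging the displayed identity into the Rashomon inequality $L(f_{\Phi(\beta)}; \tilde X, \tilde Y) \le (1+\epsilon)\,L(f_{\tilde\beta^*}; \tilde X, \tilde Y)$ and cancelling the common factor $t^2$ recovers $L(f_\beta; X, Y) \le (1+\epsilon)\,L(f_{\beta^*}; X, Y)$, so $\beta$ lies in the first Rashomon set exactly when $\Phi(\beta)$ lies in the second.

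For (ii), I would use the closed form in Lemma~\ref{MR_lm}. The relevant second moments transform as $\Cov(\tilde X_i, \tilde X_j) = s_i s_j \Cov(X_i, X_j)$ and $\Cov(\tilde Y, \tilde X_j) = t\, s_j \Cov(Y, X_j)$, while the coefficients transform as $\Phi(\beta)_j = t\beta_j/s_j$. Substituting these into \eqref{mr_lm}, the first term contributes $s_j\cdot(1/s_j)=1$ and two factors of $t$, and each summand of the cross term contributes $s_i s_j\cdot(1/s_i)(1/s_j)=1$ and two factors of $t$, so $\mr_j^{\text{diff}}\big(f_{\Phi(\beta)}\big) = t^2\, \mr_j^{\text{diff}}(f_\beta)$ for every $j$, hence $\MRel\big(f_{\Phi(\beta)}\big) = t^2\,\MRel(f_\beta)$. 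Combining (i) and (ii), $\VIC(\tilde X, \tilde Y) = \{\MRel(f_\gamma) : \gamma \in \Rash_{lm}(\epsilon,f_{\tilde\beta^*},\F_{lm};\tilde X, \tilde Y)\} = \{t^2\, \MRel(f_\beta) : \beta \in \Rash_{lm}(\epsilon,f_{\beta^*},\F_{lm};X, Y)\} = t^2\,\VIC(X, Y)$, which is exactly the statement that $\mr \in \VIC(X,Y)$ iff $t^2 \cdot \mr \in \VIC(\tilde X, \tilde Y)$.

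I do not anticipate a genuine obstacle: the argument is bookkeeping once $\Phi$ is guessed correctly. The one point requiring care is that the benchmark $f^*$ is itself data-dependent, so the Rashomon \emph{ratio} transforms cleanly only after verifying that $\Phi$ carries the loss-minimizer for $(X,Y)$ to the loss-minimizer for $(\tilde X,\tilde Y)$ — this is why $\Phi$ must include the extra factor $t$ beyond the $S^{-1}$ that rescaling features alone would suggest, and it is handled by the observation that $\Phi$ scales the loss by a positive constant. A secondary detail is keeping the centering convention (under which $\E[XX^T] = \Var(X)$ and $\E[YX^T] = \Cov(Y,X)$) consistent when invoking the covariance form of Lemma~\ref{MR_lm}; this is harmless since centering commutes with the coordinatewise rescalings. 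An alternative route substitutes the rescalings directly into the VIC characterization \eqref{VIC_lm} of Theorem~\ref{VIC_lm_thm} and tracks how $\MRel^{-1}$ changes, producing the same cancellations but less transparently than the bijection above.
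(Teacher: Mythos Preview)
Your proposal is correct and follows essentially the same approach as the paper: the paper's Appendix~\ref{AppendixA} also sets $\tilde\beta = tS^{-1}\beta$ (your $\Phi$), shows the loss scales by $t^2$ so the Rashomon sets correspond under this map, and then substitutes the rescaled moments into the formula of Lemma~\ref{MR_lm} to obtain the $t^2$ scaling of model reliance. Your presentation via a single named bijection is a bit tidier and makes the ``if and only if'' explicit, but the mathematical content is identical.
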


The proof of the corollary is given in Appendix \ref{AppendixA}. This corollary suggests that the importance of a feature does not rely on its scale, in the sense that rescaling a feature does not change the reliance of any good predictive model on the feature. (In contrast, recall that the magnitudes of the coefficients are sensitive to the scale of the data.) 

\subsection{Special Case: Uncorrelated Features}

As Equation \ref{VIC_lm} suggests, to analyze the VIC for linear models, the key is to study the inverse model reliance function $\MRel^{-1}$. Unfortunately, due to the non-linear nature of $\MRel$, it is difficult to get a closed-form expression of the inverse function in general. In this section, we focus on the special case that all the features are uncorrelated in order to understand some properties of the VIC, before proceeding to the correlated case in later subsections.

\begin{corollary}[Uncorrelated features]\label{VIC_lm_special_thm}
Suppose $\E(X_iX_j) = 0$ for all $i \neq j$. Let $L^* = \min_{f\in\F_{lm}}L(f)$ be the minimum expected loss within the class $\F_{lm}$, and choose the minimizer $f^*$ as the benchmark for the Rashomon set $\Rash = \Rash(\epsilon, f^*, \F_{lm})$. Then the VIC for linear models, $\VIC(\Rash)$, is an ellipsoid centered at $$\mr^* = \left(\frac{2\E[X_1Y]^2}{\Var(X_1) + c}, \cdots, \frac{2\E[X_pY]^2}{\Var(X_p) + c} \right),$$
with radius along dimension $j$ as follows:
$$r_j = 2\E[X_jY] \sqrt{\frac{\epsilon L^*}{\Var(X_j) + c}}.$$ Moreover, when the regularization parameter $c$ is 0, $$r_i > r_j \text{ if and only if }\rho_{iY} > \rho_{jY},$$
where $\rho_{jY}$ is the correlation coefficient between $X_j$ and $Y$.
\end{corollary}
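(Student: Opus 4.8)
The plan is to exploit the decoupling that uncorrelatedness forces on the problem: when $\E[X_iX_j]=0$ for $i\neq j$, the ridge loss, the Rashomon condition, and the model reliance map all become separable across coordinates, so the VIC collapses to an axis-aligned ellipsoid whose center and semi-axes can simply be read off.

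First I would rewrite the expected loss coordinate-wise. Assuming the data are centered (so that $\E[X_j^2]=\Var(X_j)$), the matrix $\E[XX^T+cI]$ is diagonal and
\[
L(f_\beta)=\E[Y^2]-2\sum_{j=1}^p\E[X_jY]\,\beta_j+\sum_{j=1}^p(\Var(X_j)+c)\,\beta_j^2 .
\]
Minimizing term by term gives $\beta_j^*=\E[X_jY]/(\Var(X_j)+c)$ and $L^*=\E[Y^2]-\sum_j\E[X_jY]^2/(\Var(X_j)+c)$, and completing the square in each $\beta_j$ yields $L(f_\beta)=L^*+\sum_j(\Var(X_j)+c)(\beta_j-\beta_j^*)^2$. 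Hence the Rashomon condition $L(f_\beta)\le(1+\epsilon)L^*$ is exactly $\sum_j(\Var(X_j)+c)(\beta_j-\beta_j^*)^2\le\epsilon L^*$, an ellipsoid in $\beta$-space.

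Next I would push this forward through $\MRel$. By Lemma \ref{MR_lm}, uncorrelatedness kills the cross term $\beta_{\backslash j}^T\Cov(X_{\backslash j},X_j)\beta_j$, leaving $\mr_j(f_\beta)=2\Cov(X_j,Y)\beta_j$, so $\MRel^{-1}(\mr)_j=\mr_j/(2\E[X_jY])$ (well-defined as long as $\E[X_jY]\ne 0$ for each $j$, the implicit nondegeneracy assumption). Writing $\mr_j^*:=2\E[X_jY]\beta_j^*=2\E[X_jY]^2/(\Var(X_j)+c)$ and substituting $\beta_j-\beta_j^*=(\mr_j-\mr_j^*)/(2\E[X_jY])$ into the $\beta$-space ellipsoid gives
\[
\sum_{j=1}^p\frac{\Var(X_j)+c}{4\,\E[X_jY]^2}\,(\mr_j-\mr_j^*)^2\le\epsilon L^* ,
\]
which is an axis-aligned ellipsoid centered at $\mr^*=(\mr_1^*,\dots,\mr_p^*)$; setting $\mr_{\backslash j}=\mr^*_{\backslash j}$ gives $r_j^2=4\E[X_jY]^2\epsilon L^*/(\Var(X_j)+c)$, matching the stated center and radii. (Together with Theorem \ref{VIC_lm_thm}, this identifies the VIC with the solid ellipsoid, not merely its boundary.)

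For the ordering claim, set $c=0$, so $r_j=2\sqrt{\epsilon L^*}\cdot\E[X_jY]/\sqrt{\Var(X_j)}$; centering gives $\E[X_jY]/\sqrt{\Var(X_j)}=\rho_{jY}\sqrt{\Var(Y)}$, and the prefactor $2\sqrt{\epsilon L^*\,\Var(Y)}$ is positive and independent of $j$, whence $r_i>r_j\iff\rho_{iY}>\rho_{jY}$. The computation is essentially routine; the only points needing care are the centering normalization that identifies $\E[X_j^2]$ with $\Var(X_j)$, and the implicit hypothesis $\E[X_jY]\ne 0$ (and $>0$, if one wants $r_j$ literally as written rather than with $|\rho_{jY}|$) that legitimizes the coordinate-wise inversion of $\MRel$. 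I expect the main obstacle to be nothing deeper than keeping the $\beta\leftrightarrow\mr$ change-of-variables bookkeeping consistent so that the center and radii come out in exactly the claimed form.
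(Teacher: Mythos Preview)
Your proposal is correct and follows essentially the same route as the paper: diagonalize the ridge loss under uncorrelatedness, complete the square to get the Rashomon ellipsoid in $\beta$-space, use Lemma~\ref{MR_lm} to reduce $\mr_j$ to $2\sigma_{jY}\beta_j$, and push the ellipsoid forward through this coordinate-wise linear map. The only minor difference is in the final ordering step: the paper invokes Corollary~\ref{scale} to rescale the data when $c=0$, whereas you simply factor $\E[X_jY]/\sqrt{\Var(X_j)}=\rho_{jY}\sqrt{\Var(Y)}$ directly---both reach the same conclusion, and your observation about the implicit sign/absolute-value issue is well taken (the paper's own proof in fact ends with $|\rho_{iY}|>|\rho_{jY}|$).
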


The proof of Corollary \ref{VIC_lm_special_thm} is given in Appendix \ref{AppendixB}. The corollary suggests that the VIC for linear models with uncorrelated features is an ellipsoid that parallels the coordinate axes. 

This result is useful. First, it pins down the variable importance vector $\mr^*$ for the best linear model. Second, for any accurate model, it states that the reliance on feature $j$ is bounded by $\frac{2\E[X_jY]^2}{\Var(X_j) + c} \pm 2\E[X_jY] \sqrt{\frac{\epsilon L^*}{\Var(X_j) + c}}$. Third, within the set of models that have the same expected loss, the surface of the ellipsoid tells how a reduction in the reliance on one feature can be compensated by the increase in the reliances on other features.

\subsection{Approximation of VIC with Correlated Features}\label{VIC_approx_section}

We now proceed to the general case of correlated features. The key difference is that the $\MRel$ function defined by Equation \ref{mr_lm} is no longer linear. As a result, the VIC is no longer an ellipsoid. 

While it always works to numerically compute the true VIC, from which we can directly get (1) the model reliance vector for the best linear model and (2) the bounds for the reliance on each feature for any model in the Rashomon set, it is hard to see how the reliances on different features change when we switch between models with the same loss (which is revealed by the surface of the VIC). To that end, we propose a way to approximate the VIC as an ellipsoid. Under the approximation, we can at least numerically compute the parameters of the ellipsoid, including the center, radii, and how it is rotated. We also comment on the accuracy of the approximation.

Observe that Equation \ref{mr_lm} is a quadratic function of $\beta$. By invoking Taylor's theorem, we have 
\begin{equation}\label{mr_lm2}
\mr_j(\beta) - \mr_{j}(\bar{\beta}) = \nabla^T \mr_j(\bar{\beta}) (\beta- \bar{\beta}) + \frac{1}{2}(\beta- \bar{\beta})^T H_j(\bar{\beta})(\beta- \bar{\beta}),
\end{equation} 
where $\bar{\beta} \in \R^p$ is an arbitrary vector, 
$$\nabla \mr_j(\bar{\beta}) = 
\begin{bmatrix} 
-2 \Cov(X_1, X_j) \bar{\beta}_j \\
\cdots\\
-2 \Cov(X_{j-1}, X_j) \bar{\beta}_j \\
2 \Cov(Y, X_j) - 2 \bar{\beta}_{\backslash j}^{T} \Cov(X_{\backslash j},X_j)\\
-2 \Cov(X_{j+1}, X_j) \bar{\beta}_j \\
\cdots\\
-2 \Cov(X_{p}, X_j) \bar{\beta}_j \\
\end{bmatrix},$$
and $H_j(\bar{\beta})$ is the Hessian matrix that depends only on the covarience structure of the features. (The exact expression is omitted here.)


Equation \ref{mr_lm2} is accurate since there are no higher order terms in Equation \ref{mr_lm}. The quadratic term $\frac{1}{2}(\beta- \bar{\beta})^T H_j(\bar{\beta})(\beta- \bar{\beta})$ in Equation \ref{mr_lm2} is small if either $(\beta- \bar{\beta})$ is small or the Hessian matrix $H_j$ is small. The former happens when we focus on small Rashomon sets and the latter happens when the features are less correlated. In both cases, approximating $\mr_j$ with only the linear term in Equation \ref{mr_lm2} would be close to the original function $\mr_j$. 

If we ignore the higher order term, the relationship between the model reliance vector $\MRel(\beta)$ and the coefficients $\beta$ can be more compactly written with the Jacobi matrix $J(\bar{\beta})$,
\begin{equation*}\label{mr_lm_approx}
\mr(\beta) - \mr(\bar{\beta}) = J(\bar{\beta}) (\beta - \bar{\beta}),
\end{equation*}
where the $i$th row of $J$ is $\nabla^T \mr_j(\bar{\beta})$.
That is,
\begin{align*}\label{jacobi}
J(\bar{\beta}) = 2 \cdot 
\begin{bmatrix} 
\sigma_{Y,1} - \sum_{i \neq 1} \sigma_{i,1}\bar{\beta}_i & -\sigma_{2,1} \bar{\beta}_1 & \cdots & -\sigma_{p,1} \bar{\beta}_1 \\
-\sigma_{1,2} \bar{\beta}_2 & \sigma_{Y,2} - \sum_{i \neq 2} \sigma_{i,2}\bar{\beta}_i & \cdots & -\sigma_{p,2} \bar{\beta}_2 \\
\cdots & \cdots & \cdots & \cdots \\
\cdots & \cdots & \cdots & \cdots \\
-\sigma_{1,p} \bar{\beta}_p & -\sigma_{2,p} \bar{\beta}_p & \cdots & \sigma_{Y,p} - \sum_{i \neq p} \sigma_{i,p}\bar{\beta}_i
\end{bmatrix}
,
\end{align*}
where $\sigma_{i,j} = \Cov(X_i, X_j)$ and $\sigma_{Y,i} = \Cov(Y, X_i)$.

We assume the Jacobi matrix is invertible. (Cases where this would not be true are, for instance, cases where $\Cov(X, Y)$ are all 0, which means there no signal for predicting $Y$ from the $X_i$'s.) Then we can linearly approximate the inverse $\MRel$ function as follows. 

\begin{definition}\label{MR_approx}
For an arbitrary vector $\bar{\beta} \in \R^p$, the approximated $\MRel^{-1}$ is given by 
\begin{equation*}
\MRel^{-1}(\mr) \approx \bar{\beta}+ J^{-1}(\bar{\beta})(\mr - \overline{\mr}),
\end{equation*}
where $\overline{\mr} = \MRel(\bar{\beta})$. 
\end{definition}

We can choose any $\bar{\beta}$ to approximate $\MRel^{-1}$, and we should choose it depending on our purpose. Suppose we are interested in the approximation performance at the boundary of the Rashomon set, it makes sense to pick $\bar{\beta}$ that lies on the boundary. Instead, for overall approximation performance, we should choose $\bar{\beta} = \beta^*$, which is the vector that minimizes expected loss. We can apply Definition \ref{MR_approx} to Theorem \ref{VIC_lm_thm} as follows.

\begin{theorem}\label{VIC_lm_approx_thm}
Fix a benchmark model $f_{\beta^*}\in \F_{lm}$, a factor $\epsilon > 0$. Pick a $\bar{\beta} \in \R^p$. A vector $\mr$ is in the approximated VIC if it satisfies
\begin{align}\label{VIC_lm_approx}
\widetilde{\mr}^T J^{-T} \E [XX^T + cI] J^{-1} \widetilde{\mr} + 2(\bar{\beta}^T\E[XX^T+cI] - \E[YX^T])J^{-1}\widetilde{\mr} +  L(f_{\bar{\beta}})\leq L(f_{\beta^*})(1+\epsilon),
\end{align}
where $\widetilde{\mr} = \mr - \overline{\mr}$.
\end{theorem}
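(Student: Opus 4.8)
The plan is to substitute the linearized inverse model-reliance map of Definition \ref{MR_approx} directly into the exact VIC condition of Theorem \ref{VIC_lm_thm} and expand the resulting quadratic form. Recall that the left-hand side of Equation \ref{VIC_lm} is nothing but the expected ridge loss $L(f_\beta)$ evaluated at $\beta = \MRel^{-1}(\mr)$; writing $A := \E[XX^T + cI]$ (which is symmetric) and $b := \E[YX^T]$, we have $L(f_\beta) = \beta^T A \beta - 2 b\, \beta + \E[Y^2]$. The approximated VIC is, by construction, the set of $\mr$ for which the approximate preimage $\beta = \bar\beta + J^{-1}(\bar\beta)\widetilde{\mr}$, with $\widetilde{\mr} = \mr - \overline{\mr}$ and $\overline{\mr} = \MRel(\bar\beta)$, lies in the Rashomon set, i.e.\ for which $L\!\left(f_{\bar\beta + J^{-1}\widetilde{\mr}}\right) \leq (1+\epsilon) L(f_{\beta^*})$.

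First I would plug $\beta = \bar\beta + J^{-1}\widetilde{\mr}$ into $\beta^T A\beta - 2b\beta + \E[Y^2]$ and group terms by their degree in $\widetilde{\mr}$. The degree-zero part collapses to $\bar\beta^T A \bar\beta - 2 b\bar\beta + \E[Y^2] = L(f_{\bar\beta})$. The degree-two part is immediately $\widetilde{\mr}^T J^{-T} A J^{-1}\widetilde{\mr}$, since $(J^{-1}\widetilde{\mr})^T A (J^{-1}\widetilde{\mr}) = \widetilde{\mr}^T J^{-T} A J^{-1}\widetilde{\mr}$. For the degree-one part, the expansion produces the two cross terms $\bar\beta^T A J^{-1}\widetilde{\mr}$ and $(J^{-1}\widetilde{\mr})^T A\bar\beta$; since both are scalars and $A = A^T$, they are equal, so together with $-2bJ^{-1}\widetilde{\mr}$ they combine to $2(\bar\beta^T A - b)J^{-1}\widetilde{\mr}$. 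Collecting the three pieces reproduces exactly Equation \ref{VIC_lm_approx}, and since every step is an identity the stated ``if'' condition in fact characterizes membership in the approximated VIC.

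There is essentially no deep obstacle here: the argument is a bookkeeping exercise in expanding a quadratic form, and the only things to be careful about are the symmetry of $\E[XX^T + cI]$ when merging the two linear cross terms and the standing assumptions invoked just before the statement, namely that $J(\bar\beta)$ is invertible (so $J^{-1}$ and $J^{-T}$ are well defined) and that $\MRel^{-1}$ exists so that Theorem \ref{VIC_lm_thm} applies. It would also be worth one sentence noting \emph{why} this yields only an approximation: $\MRel^{-1}$ has been replaced by its first-order Taylor expansion about $\bar\beta$, and by Equation \ref{mr_lm2} the discarded term is precisely the Hessian correction $\tfrac{1}{2}(\beta-\bar\beta)^T H_j(\bar\beta)(\beta-\bar\beta)$, so the description is exact in directions where $H_j(\bar\beta)$ vanishes and is otherwise controlled by $\norm{\beta - \bar\beta}$, consistent with the discussion following Equation \ref{mr_lm2}.
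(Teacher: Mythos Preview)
Your proposal is correct and matches the paper's approach: the paper does not give a separate proof of this theorem but simply says ``We can apply Definition \ref{MR_approx} to Theorem \ref{VIC_lm_thm} as follows,'' and your substitution of $\beta = \bar\beta + J^{-1}\widetilde{\mr}$ into the ridge loss $\beta^T A\beta - 2b\beta + \E[Y^2]$ with the three-way split into degree-$0$, degree-$1$, and degree-$2$ terms is exactly the bookkeeping that makes this explicit. The only content beyond the paper is that you spell out why the two linear cross terms merge (symmetry of $A$) and flag the standing invertibility assumption on $J(\bar\beta)$, both of which are appropriate.
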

The theorem suggests that the approximated VIC is an ellipsoid. Therefore, we can study its center and radii and perform the same tasks as mentioned in the previous subsection. More details are provided in Appendix \ref{AppendixC}, namely the formula for the ellipsoid approximation of the VIC for correlated features. In what follows, we discuss the accuracy of the approximation.

Recall that from Equation \ref{mr_lm2} we have $\widetilde{\mr}_j = \nabla^T \mr_j(\beta^*) \tilde{\beta} + \frac{1}{2}\tilde{\beta}^T H_j \tilde{\beta}$, where $\widetilde{\mr}_j = \mr_j(\beta) - \mr_j(\bar{\beta})$ and $\tilde{\beta} = \beta - \bar{\beta}$. By dropping the second order term, we introduce the following error for $\widetilde{\mr}_j$,
\begin{equation*}
err_j = \frac{1}{2}\tilde{\beta}^T H_j\tilde{\beta} = -\sum_{i \neq j} \sigma_{ij}\tilde{\beta_i}(\tilde{\beta_i} + \tilde{\beta_j}),
\end{equation*}
where $\sigma_{ij} = \E(X_iX_j)$.

Note that $\abs{\tilde{\beta}_j}$ is bounded by the radius of the Rashomon ellipsoid along dimension $j$,
denoted by $l_j$. Then it follows that 
$$\abs{err_j} \leq \sum_{i \neq j} \abs{\sigma_{ij}}l_i(l_i + l_j).$$
It demonstrates our intuition that the approximation is more accurate when there is less correlation among the features or when the Rashomon set is smaller.

\subsection{2D Visualization of VIC for Linear Models}

We visualize the VIC for linear models in the simplest 2-feature case for a better understanding of the VIC. Let $Z = (Y, X_1, X_2) \in \R^3$. We normalize the variables so that $\E(Z) = \mathbf{0}$. It follows that
$$\Var(Z) = \E(ZZ^T) = \E
\begin{bmatrix}
Y^2 & YX_1 & YX_2 \\
YX_1 & X_1^2 & X_1X_2  \\
YX_2 & X_2X_1 & X_2^2
\end{bmatrix}.$$
Recall that Theorem \ref{VIC_lm_thm} and Lemma \ref{MR_lm} suggest that the VIC is completely determined by the matrix $\Var(Z)$. Moreover, by Corollary \ref{scale}, we can assume without loss that $\sigma_{11} = \sigma_{22} = \sigma_{YY} = 1$. Effectively, the only parameters are the correlation coefficients $\rho_{12}$, $\rho_{1Y}$, and $\rho_{2Y}$, which are the covariates $\sigma_{ij}$ normalized by standard deviations $\sqrt{\sigma_{ii}\sigma_{jj}}$.

We visualize the VIC with regularization parameter $c = 0$. As is discussed above, for larger $c$ the Rashomon set has a smaller size, so that the VIC is closer to an ellipse.

\begin{figure}[ht]
\centering
\includegraphics[scale = 0.3]{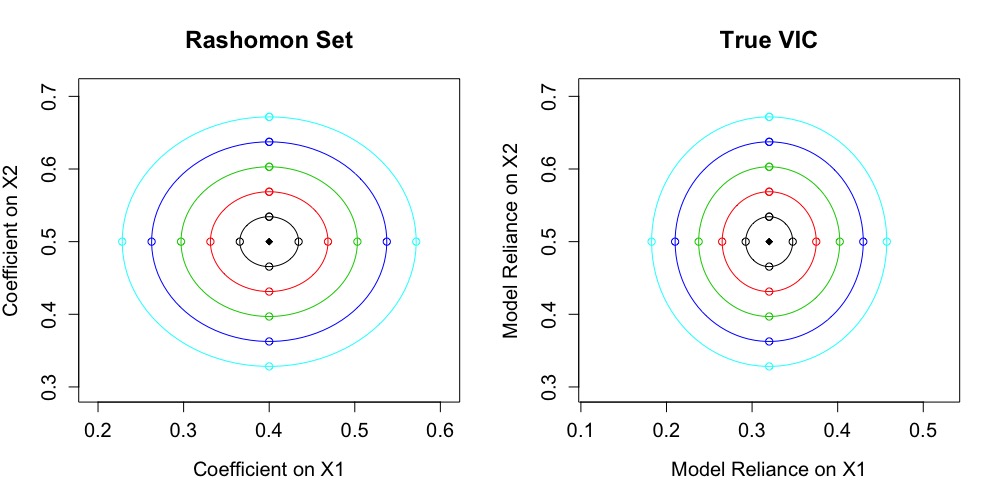}
\caption{The VIC for uncorrelated features: $\rho_{12} = 0$, $\rho_{1Y} = 0.4$, $\rho_{2Y} = 0.5$. The outer curve corresponds to the Rashomon set and the VIC with $\epsilon = 0.05$, and the inner curves correspond to smaller $\epsilon$'s.}
\label{VIC_uncorrelated}
\end{figure}

Figure \ref{VIC_uncorrelated} visualizes the special case where the features are uncorrelated. The left panel of Figure \ref{VIC_uncorrelated} is the Rashomon set. The axes are the values of the coefficients. The Rashomon set is centered at the coefficient of the best linear model. Each ellipse is an iso-loss curve and the outer curves have larger losses. The right panel of Figure \ref{VIC_uncorrelated} is the VIC, with the axes being the model reliances on the features. The center point is model reliance vector of the best linear model, and each curve corresponds to a Rashomon set in the left panel. As is pointed out in Corollary \ref{VIC_lm_special_thm}, when the features are uncorrelated, the VIC is an ellipsoid. We also observe that the VIC ellipses are narrower along $X_1$ than $X_2$, since $\rho_{1Y}<\rho_{2Y}$, which also demonstrates the result in Corollary \ref{VIC_lm_thm}.

When the features are correlated, the VIC is no longer an ellipsoid. We can see from Figure \ref{VIC_small_correlation} below that indeed this is the case. The upper left panel contains the Rashomon sets and the upper right panel contains the corresponding VIC's. Since there is not much correlation between $X_1$ and $X_2$, the VIC's are close to ellipses, especially if we are interested in the inner ones which correspond to smaller Rashomon sets.

As before, we may be interested in approximate the VIC with an ellipse. The lower left panel of Figure \ref{VIC_small_correlation} is the approximated VIC where we invoke Taylor's theorem at the center of the Rashomon set. We can see that the approximated VIC, which is represented by the dashed curve, is indeed close to the true VIC. If we are interested in the performance at the boundary, we may want to invoke Taylor's theorem at the boundary of the Rashomon set, which is visualized by the lower right panel, for four different points on the boundary.

\begin{figure}[ht]
\centering
\includegraphics[scale = 0.3]{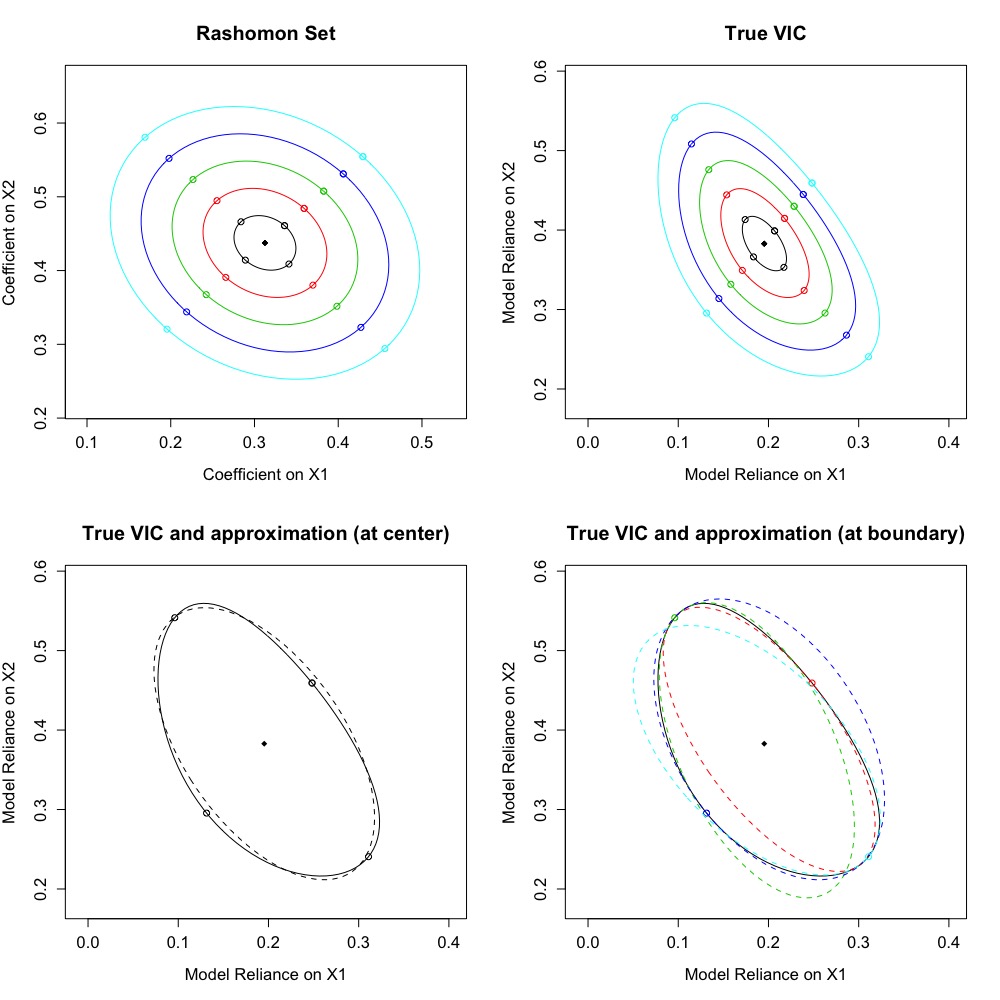}
\caption{The VIC for correlated features: $\rho_{12} = 0.2$, $\rho_{1Y} = 0.4$, $\rho_{2Y} = 0.5$. The outer curve corresponds to the Rashomon set and VIC with $\epsilon = 0.05$, and the inner curves correspond to smaller $\epsilon$'s.}
\label{VIC_small_correlation}
\end{figure}

The VIC can no longer be treated as an ellipse when there is large correlation in the features, and the approximation is far from accurate. This is illustrated by Figure \ref{VIC_large_correlation} below.

\begin{figure}[ht]
\centering
\includegraphics[scale = 0.3]{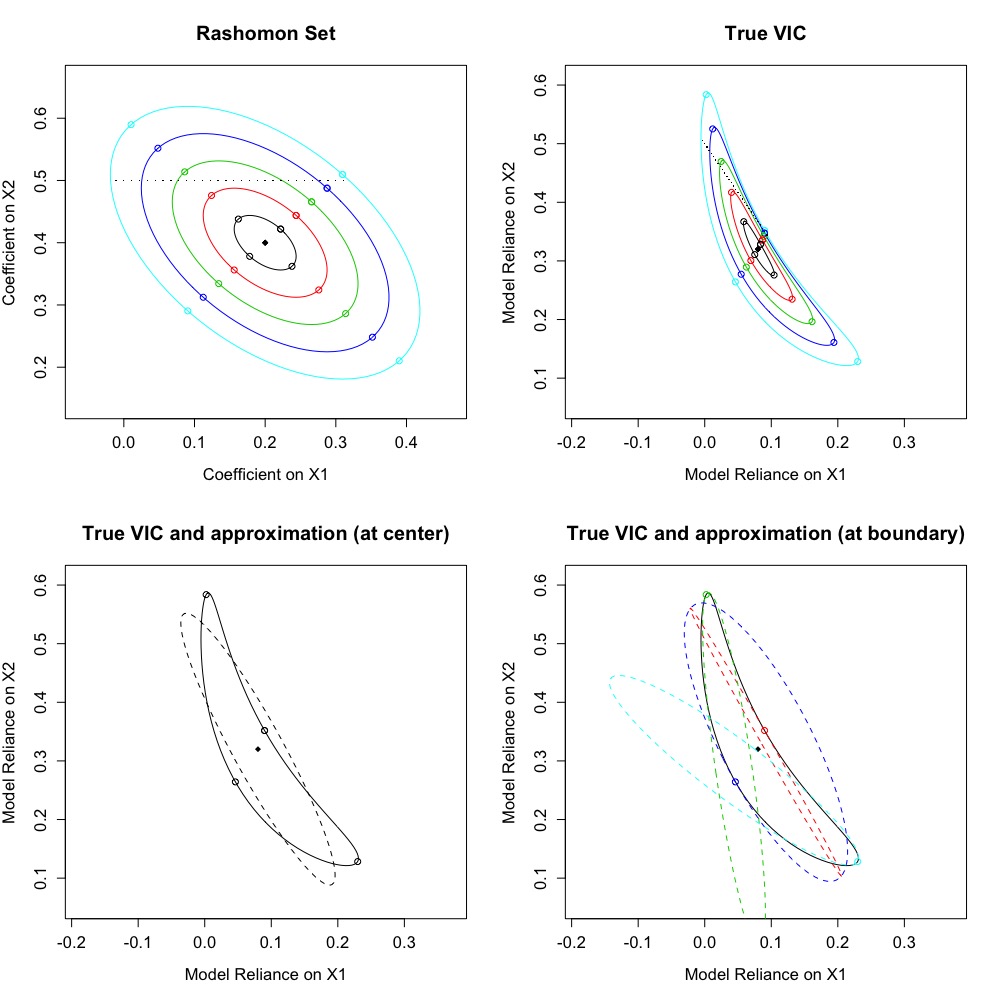}
\caption{The VIC for correlated features: $\rho_{12} = 0.5$, $\rho_{1Y} = 0.4$, $\rho_{2Y} = 0.5$. The outer curve corresponds to the Rashomon set and VIC with $\epsilon = 0.05$, and the inner curves correspond to smaller $\epsilon$'s. }
\label{VIC_large_correlation}
\end{figure}

\section{VIC for Non-linear Problems}

Now that we understand the VIC for linear models, we will apply our analysis to broader applications. 

Our analysis for linear models has made clear that to study the VIC, there are two key ingredients: (1) finding the Rashomon set and (2) finding the $\MRel$ function. We discuss the algorithm for finding the $\MRel$ function in details here in the context of general problems, before proceeding to the algorithm for finding the Rashomon set, which can only be done case-by-case depending on the class of predictive models we are interested in.

\subsection{Finding the $\MRel$ function}

We adopt the following procedure to compute empirical model reliance. Recall that the (population) reliance of model $f$ on feature $X_j$ is defined as the ratio of $L(f; [X_{\backslash j}, \bar{X}_j], Y)$ and $L(f; X, Y)$. The latter is the original expected loss, which can be computed with its empirical analog. The former is the shuffled loss after replacing the random variable $X_j$ with its i.i.d$.$ copy $\bar{X}_j$. We permute the $j^{th}$ column of our dataset $\X$ and compute the empirical loss based on the shuffled dataset. By averaging this empirical shuffled loss several times with random permutations, the average shuffled loss should well-approximate the expected shuffled loss.

While this method works for general datasets, in some applications with binary datasets (both features and outcome are binary variables), the empirical model reliance can be computed with a simpler method. Suppose we are interested in the reliance of a predictive model on variable $X_j$. Compute the loss $L_0$ when $X_j = 0$ and the loss $L_1$ when $X_j = 1$. Find the frequency $p_j$ for $X_j$ to be $1$ in the dataset. Then the shuffled loss is $p_j L_1 + (1-p_j)L_0$.

\subsection{Finding the Rashomon set}

We discuss how to find the Rashomon set for non-linear problems, including logistic regression models and decision trees. For logistic regression models, we approximate the Rashomon set by an ellipsoid, through a sampling step followed by principal components analysis to create the ellipsoid. The sampling and PCA steps are repeated several times until the estimate becomes stable. For decision trees, we consider the case where the data are binary. We find the Rashomon set in the following way: We start with the best decision tree and start to flip the predictions in its leaf nodes. We start with the leaf node with the minimal incremental loss and stop when the flipped decision tree is no longer considered to be good (according to the Rashomon set definition).

\subsubsection{Logistic Regression}

For a dataset $(\X, \y)$ with $\X \in \R^{n \times p}$ and $\y \in \{-1, 1\}^n$, we define the following empirical logistic loss function
$$L(\beta; \X, \y) = \sum_{i=1}^n \log \left(1 + \exp(-y_i \beta^Tx_i)\right),$$
where $\beta \in \R^p$ and $x_i$ is the $i^{th}$ row of $\X$. 

We consider the logistic model class $$\F_{logistic} = \left\{f_\beta: \R^p \rightarrow \R | f_\beta(x) = \frac{1}{1 + e^{\beta^T x}}\right\}.$$ Notice that we can identify this set with $\R^p$, since every logistic model $f \in \F_{logistic}$ is completely characterized by $\beta \in \R^p$. Therefore, we define $\F = \R^p$ instead to represent parameter space.
Let $\beta^*$ be the coefficient that minimizes the logistic loss. Then the Rashomon set $\Rash = \Rash(\epsilon, f_{\beta^*}, \F)$ and the variable importance cloud $\VIC = \VIC(\Rash)$ is given by Definitions \ref{Rashomon} and \ref{VIC}. Let us go through these steps for logistic regression.

The empirical $\MRel$ function is introduced above. Let us now find the Rashomon set. Note that there is no closed-form expression for the Rashomon set for logistic regression models, but it is convex. We approximate it with a $p$-dimensional ellipsoid in $\R^p$. Under this approximation, we can sample the coefficients from the Rashomon set, and proceed to the next step of VIC analysis. Below is the algorithm we use to approximate the Rashomon set with an ellipsoid.
\begin{enumerate}
\item Find the best logistic model $\beta^*$ that minimizes the logistic loss. Let $L^*$ be the minimum loss.
\item Initial sampling: Randomly draw a set of $N$ coefficients in a ``box'' centered at $\beta^*$. Eliminate the coefficients that give logistic losses that exceed $(1+\epsilon)L^*$.
\item PCA: Find the principle components. Compute the center, radii of axes, and the eigenvectors. To get the boundary of the Rashomon set, resample $N$ coefficients from a slightly larger ellipsoid with radii multiplied by some scaling factor $r>1$. The sampling distribution is a $\beta(1,1)$ distribution along the radial axis in order to get more samples closer to the boundary. Eliminate the coefficients that give logistic losses that exceed $(1+\epsilon)L^*$.
\item Repeat the third step $M$ times.
\end{enumerate}

There are four parameters in the whole process, the number of coefficients $N$ to sample in each step, the size of the box for initial sampling, the scaling factor $r$ that scales the ellipsoid, and the number of iterations $M$. The parameter $N$ need to be a large number for robustness, but not too large for computation. The size of the box should be large enough to include some points outside the Rashomon set, so as to get a rough boundary for the Rashomon set. The same applies to the factor $r$. We need to repeat $M$ times to get a stable boundary of the Rashomon set. 

In our experiments (Section 5), we set $N = 500$ and set the size of the box as a certain factor times the standard deviation of the logistic estimator $\beta^*$ so that about $75\%$ of the sampled coefficients survive the elimination in the initial round. 

We tune the other two parameters to get a robust approximation of the Rashomon set. Suppose we want to choose the optimal scaling factor and number of iterations $(r^*, M^*)$ from a set of candidates. Let $\bar{r}$ be an upper bound of the scaling factors. For each candidate $(r, M)$, we implement the above algorithm and get the resulting ellipsoid. We sample $N$ coefficients from this ellipsoid scaled by $\bar{r}$, and count the number of coefficients that remain in the Rashomon set and compute the survival rate. This number is related to the performance of the algorithm with parameter $(r, M)$.

We first discuss the consequence of changing the scaling factor $r$. If the factor $r$ is too small, every coefficient is in the Rashomon set. Effectively, we are sampling from a strict subset of the Rashomon set, even though we scaled it by the factor $\bar{r}$, so that the survival rate is 1. If we use the algorithm with this pair of parameters $(r, M)$, we only get a subset of the Rashomon set. Hence the approximation is not accurate. As $r$ increases, the ellipsoid that approximates the Rashomon set grows bigger. As we test the performance by sampling from the ellipsoid scaled by the upper bound on the scaling factor, namely $\bar{r}$, we are sampling from a superset of the Rashomon set. Only a fixed portion of the sampled points are in the Rashomon set, because both the Rashomon set and $\bar{r}$ are fixed.

As $r$ grows, the survival rate would first decrease and then become stable. The factor $r$ at which the survival rate becomes stable should be used in the algorithm. This is because with this factor, we get the boundary of the Rashomon set. The resulting ellipsoid well-approximates the Rashomon set. Figure \ref{robustness} below demonstrates the argument.

Now we discuss the consequence of changing $M$. Due to the initial sampling in a box, we expect that it takes several iterations to approximate the Rashomon set. Therefore, the survival rate may change when $M$ is small. On the other hand, when $M$ becomes large, the survival rate should not change, since effectively we are repeatedly sampling from the same ellipsoid. We should pick the value of $M$ at which the survival rate becomes stable.

\begin{figure}
\centering
\includegraphics[scale = 0.55]{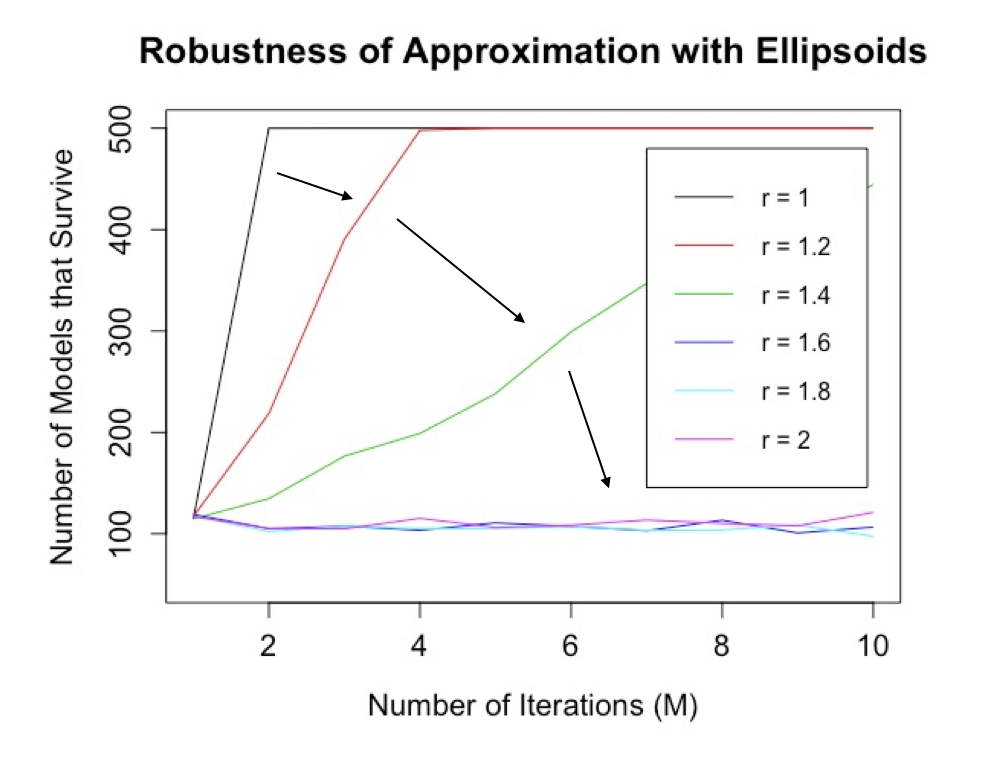}
\caption{Tuning the parameters $(r, M)$. As $r$ increases, the number of points that survive the test decreases and becomes stable. Similarly, when $M$ increases, the number of points that survives the test also becomes stable. The figure is generated by the tuning process for the experiment in Section 5.1.}
\label{robustness}
\end{figure}


\subsubsection{Decision Tree}

In this subsection, we implement the VIC analysis for binary data. The method extends to categorical data as well.

For a binary dataset $(\X, \y)$ with $\X \in \{0,1\}^{n \times p}$ and $\y \in \{-1, 1\}^n$, a decision tree is represented by a function $f: \{0,1\}^p \rightarrow \{-1,1\}$.\footnote{This is actually an equivalent class of decision trees.} A decision tree $f$ splits according to feature $j$ if there exists $x, x' \in \{0,1\}^p$ with $x_j \neq x'_j$ and $x_{-j} = x'_{-j}$, and $f(x) \neq f(x')$. We restrict our attention to the set of trees that splits according to no more than $N$ features, and denote this class by $\F_N$. The purpose is to exclude overfitted trees.

We define loss as misclassification error (0-1 loss). In particular, 
$$L(f; \X, \y) = \sum_{i=1}^n \mathbf{1}[f(x_i) \neq y_i],$$
where $x_i$ is the $i^{th}$ row of $\X$. Let $f^* \in \F$ be the tree in our model class that minimizes the loss. Then we have the Rashomon set $\Rash = \Rash(\epsilon, f^*, \F)$ and the variable importance cloud $\VIC = \VIC(\Rash)$ defined as usual. We use the method described above find the empirical $\MRel$ function and below we describe how to find the Rashomon set.

Again, there is no closed-form expression for the Rashomon set for decision trees. This time, we are going to search for the true Rashomon set, without approximation, using the fact that features are binary. (The same method extends to categorical features.) Suppose we want to find all ``good trees'' that split according to features in the set $\{1, 2, \cdots, m\}$, with $m < p.$ There can be at most $2^m$ unique realizations of the features that affect the prediction of the decision tree. Moreover, there are at most $2^{2^m}$ equivalent classes of decision trees, since the outcome is also binary. The na\"ive method is to compute the loss for each equivalence class of trees, and the collection of ``good trees'' forms the Rashomon set.

While this method illustrates the idea, it is practically impossible for $m$ as low as 4. Alternatively, for each of the $2^m$ unique observations, we count the frequency of $y = 1$ and $y = -1$ and record the \emph{gap} of these counts for each observation. (We will define the gap formally in the next section.) The best tree predicts according to the majority rule. The second and third best trees flip the prediction for the observation with the smallest and second-to-the-smallest gaps. The fourth best tree either flips the prediction for the one with the third-to-the-smallest gap, or for both with the smallest- and second-to-the-smallest gap, whichever is smaller. Searching for trees with this method, we can stop the process once we reach a tree that has more than $(1+\epsilon) L^*$ loss, where $L^*$ is loss of the best tree. This method is computationally feasible.

\subsection{Comparing VICs for Logistic Regression and Decision Trees}

The VIC for decision trees is different from that for logistic models in two ways. First, the VIC for decision trees is discrete. Second, there might be a clustering structure in the VID for decision trees.

To explain the first difference, note that we define model reliance differently. For decision trees, it is defined as the ratio of 0-1 losses before and after shuffling the observations. For logistic models, it is defined as the ratio of logistic losses. While logistic loss is continuous in coefficients for a given dataset, 0-1 loss may jump discretely even for a small modification of the tree. That explains why the VIC for decision trees is discrete. The remainder of this subsection attempts to gain intuition about the clustering structure.

For any possible realization of the features $x \in \{0,1\}^p$, let $\#(x;1)$ be the number of observations with features $x$ and outcome 1. $\#(x;-1)$ is defined similarly. For simplicity, we leave out the sparsity restriction for simplicity. In this case, the best decision tree $f^*$ can be defined as 
\begin{equation*}
f^*(x) = \begin{cases}
1 \quad\;\, \text{ if } \#(x;1) \geq \#(x;-1),\\
-1 \quad \text{otherwise}.
\end{cases}
\end{equation*}
For illustration, we consider the clustering structure along the $\mr_1$ dimension, which pertains to feature $X_1$. Let $L^*$ be the loss associated with $f^*$ and $\mr_1^*$ be its reliance on $X_1$. We now characterize the conditions so that there are clusters of points in the VIC. Fix a vector $\bar{x}_{\backslash 1} \in \{0,1\}^{p-1}$. Let $A(\bar{x}_1)^+ = \#([\bar{x}_1, \bar{x}_{\backslash 1}]; 1)$ and $A(\bar{x}_1)^- = \#([\bar{x}_1, \bar{x}_{\backslash 1}]; -1)$ for $\bar{x}_1 \in \{0,1\}$. For example, $A(1)^+$ is the number of observations with feature $[1, \bar{x}_{\backslash 1}]$ and outcome $1$. Consider the tree $f_{[\bar{x}_1,\bar{x}_{-1}]}$ that satisfies
\begin{equation*}
f_{[\bar{x}_1,\bar{x}_{-1}]}(x) = \begin{cases}
-f^*(x) \quad \text{ if } x = [\bar{x}_1,\bar{x}_{-1}],\\
f^*(x) \quad \quad \text{otherwise}.
\end{cases}
\end{equation*}
That is, $f_{[\bar{x}_1,\bar{x}_{-1}]}$ flips the prediction for the observation $x = [\bar{x}_1,\bar{x}_{-1}]$ only. This tree has a total loss that is larger than $L^*$ by the gap $e$:
$$e = \abs{A(\bar{x}_1)^+ - A(\bar{x}_1)^-}.$$ We assume that $e \leq \epsilon L^*$ so that the tree $f_{[\bar{x}_1,\bar{x}_{-1}]}$ is in the Rashomon set.

Now consider the shuffled loss. Observe that $f^*$ and $f_{[\bar{x}_1,\bar{x}_{-1}]}$ only differs when $x = [\bar{x}_1,\bar{x}_{-1}]$. When computing the shuffled loss, the difference comes from the observations with $x = [\bar{x}_1,\bar{x}_{-1}]$ whose values for feature $X_1$ remain the same after shuffling, and the observations with $x = [1- \bar{x}_1,\bar{x}_{-1}]$ whose values for feature 1 change after shuffling. There are $A(\bar{x}_1)^+$ observations with features $[\bar{x}_1,\bar{x}_{-1}]$ and outcome $1$, and $A(\bar{x}_1)^-$ observations with features $[\bar{x}_1,\bar{x}_{-1}]$ and outcome $-1$. Therefore, the former situation can contribute to the difference in shuffled loss by no more than $e = \abs{A(\bar{x}_1)^+ - A(\bar{x}_1)^-}$ (and the loss for $f_{[\bar{x}_1,\bar{x}_{-1}]}$ is larger). Similarly, the latter situation can contribute to the difference in shuffled loss by no more than $e' =  \abs{A(1-\bar{x}_1)^+ - A(1-\bar{x}_1)^-}$ (yet it is ambiguous whether the loss for $f_{[\bar{x}_1,\bar{x}_{-1}]}$ is larger or smaller than $f$ by $e'$).

Since the best tree $f^*$ has loss $L^*$ and reliance $\mr^*_1$ on $X_1$, its shuffled loss is $\mr^*_1 L^*$. The original loss of $f_{[\bar{x}_1,\bar{x}_{-1}]}$ is $L^* + e$ and its shuffled loss is $\mr^*_1 L^* + pe \pm (1-p)e'$, where $p = Prob (x_1 = \bar{x}_1)$. Therefore, we know that the reliance of the two trees on feature $X_1$ differ by
\begin{align*}
\mr_1 - \mr_1^* = & \frac{\mr^*_1 L^* + pe \pm (1-p)e'}{L^* + e} - \mr_1^* \\
= & \frac{ (p - \mr_1^*)e \pm (1-p)e'}{L^* + e}.
\end{align*}
For the set of decision trees that flip the prediction of $f^*$ at one leaf  and whose increments in loss do not exceed $\epsilon L^*$, if none of them has a large $\mr_1 - \mr_1^*$, then there is no cluster in $\mr_1$ dimension. Otherwise there could be clustering, we show this empirically for a real dataset in Section 5.1.

\section{Ways to Use VIC}

We discuss ways to use the VIC in this section and focus on understanding variable importance in the context of the importance of other variables and providing a context for model selection.

\subsection{Understanding Variable Importance with VIC/VID}

The goal of this paper is to study variable importance in the context of the importance of other variables. We illustrate in this section how VIC/VID achieves this goal with a simple thought experiment regarding criminal recidivism prediction. To provide background, in 2015 questions arose from a faulty study done by the Propublica news organization, about whether a model (COMPAS - Correction Offender Management Profiling for Alternative Sanctions) used throughout the US Court system was racially biased. In their study, Propublica found a linear model for COMPAS scores that depended on race; they then concluded that COMPAS must depend on race, or its proxies that were not accounted for by age and criminal history. This conclusion is based on methodology that is not sound: what if there existed another model that did not depend on race (given age and criminal history), but also modeled COMPAS well?

While we will study the same dataset Propublica used to analyze variable importance for criminal recidivism prediction in the experiment section, we perform a thought experiment here to see how VIC/VID addresses this problem. Consider the following data-generating process. Assume that a person who has committed a crime before (regardless of whether he or she was caught or convicted) is more likely to recidivate, which is independent of race or age. However, for some reason (e.g., discrimination) a person is more likely to be found guilty (and consequently has prior criminal history) if he or she is either young or black. Under these assumptions, there might be three categories of models that predict recidivism well: each relies on race, age or prior criminal history as the most important variable. Thus, it is not sound to conclude without further justification that recidivism depends on race.

In fact, we may find all three categories of models in the Rashomon set. The corresponding VIC may look like a 3d ellipsoid in the space spanned by the importance of race, age and prior criminal history. Note that the surface of the ellipsoid represents models with the same loss. We may find that, staying on the surface, if the importance of race is lower, either the importance of age or prior criminal history is higher. We may conclude that race is important for recidivism prediction only when age and prior criminal history are not important, which is a more comprehensive understanding of the dataset as well as the whole class of well-performing predictive models, compared with Propublica's claim.

For more complicated datasets and models, the VIC's are in higher dimensional spaces, making it hard to make any statement directly from looking at the VIC's. In these situations, we need to resort to the VID. In the context of the current example, we project VIC into the spaces spanned by pairs of the features, namely (age, race), (age, prior criminal history) and (race, prior criminal history). Each projection might look like an ellipse. Under our assumptions regarding the data-generating process, we might expect, for example, a downward sloping ellipse in the (race, prior criminal history) space, indicating the substitution of the importance of race and prior criminal history.

The axes of this thought experiment are the same as those observed in the experiments in Section 5.1; there however, we make no assumption about the data-generating process.

\subsection{Trading off Error for Reliance: Context for Model Selection}

VIC provides a context for model selection. As we argued before, we think of the Rashomon set as a set of almost-equally accurate predictive models. Finding the single best model in terms of accuracy may not make a lot of sense. Instead, we might have other concerns (beyond Bayesian priors or regularization) that should be taken into account when we select a model from the Rashomon set. Effectively, we trade off our pursuit for accuracy for those concerns.

For example, in some applications, some of the variables may not be {\em admissible}. When making recidivism predictions, for instance, we want to find a predictive model that does not rely explicitly on racial or gender information. If there are models in the Rashomon set that have no reliance on both race or gender, we should use them at the cost of reducing predictive accuracy. This cost is arguably negligible, since the model we switch to is still in the Rashomon set. It could be the case that every model in the Rashomon set relies on race to some non-negligible extent, suggesting that we cannot make good predictions without resorting to explicit racial information. While this limitation would be imposed by the dataset itself, and while the trade-off between accuracy and reliance on race is based on modeler discretion, VIC/VID would discover that limitation.

In addition to inadmissible variables, there could also be situations in which we know {\em a priori} that some of the variables are less credible than the others. For instance, self-reported income variables from surveys might be less reliable than education variables from census data. We may want to find a good model that relies less on variables that are not credible. VIC is a tool to achieve this goal. This application is demonstrated in Section 5.2.

\subsection{Variable Importance and Its Connection to Hypothesis Testing for Linear Models}

Recall that model reliance is computed by comparing the loss of a model before and after we randomly shuffle the observations for the variable. Intuitively, this should tell the degree to which the predictive power of the model relies on the specific variable. Another proxy for variable importance for linear models could be the magnitude of the coefficients (assuming features have been normalized). When the coefficient is large, the outcome is more sensitive to changes in that variable, suggesting that the variable is more important. This measure is also connected to hypothesis testing; the goal of this subsection is to illustrate this. 

We first argue that the magnitude of the coefficients is a different measure of variable importance than model reliance. Coefficients do not capture the correlations among features, whereas model reliance does. We illustrate this argument with Figure \ref{VIC_large_correlation}. The dotted line in the upper left panel is the set of models within the Rashomon set that have the same $\beta_2 = 0.5$ and different $\beta_1$. The coefficients might suggest that feature $X_2$ is equally important to each of these models, because $X_2$'s coefficient is the same for all of them. (The coefficient is 0.5.) We compute the model reliance for these models and plot them with the dotted line in the upper right panel of Figure \ref{VIC_large_correlation}. (One can check that these indeed form a line.) This suggests that these models rely on feature $X_2$ to different degrees. This is because the variable importance metric based on coefficients ignores the correlations among features. On the other hand, model reliance on $X_2$ is computed by breaking the connection between $X_2$ and the rest of the data $(Y, X_1)$. One can check that $\mr_2 = 2Cov(Y - X_1\beta_1, X_2\beta_2)$, which intuitively represents the correlation between $X_2$ and the variation of $Y$ not explained by $X_1$. Therefore, this measure is affected by the correlation between $X_1$ and $X_2$.

While one can check whether a variable is important or not by hypothesis testing, this technique relies heavily on parametric assumptions. On the other hand, model reliance does not make any additional assumption beyond that the observations are i.i.d. However, given the {\em same} set of assumptions for testing the coefficients, we can also test whether the model reliance of the best linear model on each feature is zero or not when the regularization parameter $c$ is 0. (See Appendix \ref{AppendixD} for the set of assumptions.) 

\begin{theorem}\label{hypothesis}
Fix a dataset $(\X, \y)$. Let $\hat{\beta} = (\X^T\X)^{-1}\X^T\y$ be the best linear model. Let $\widehat{\MRel}_j: \R^p \rightarrow \R$, the empirical model reliance function for variable $j$, be given by $$\widehat{\MRel}_j(\beta) = 2\widehat{Cov}(Y, X_j)\beta_j - 2 \beta^T \widehat{Cov}(X, X_j)\beta_j + 2 \widehat{Var}(X_j)\beta_j^2,$$
where $\widehat{Cov}$ and $\widehat{Var}$ are the empirical covariance and variance. Let $$\hat{\Sigma} = \nabla^T \widehat{\MRel}_j(\hat{\beta}) \widehat{Var}(\hat{\beta})\nabla \widehat{\MRel}_j(\hat{\beta}),$$
where $\nabla^T \widehat{\MRel}_j$ is the gradient of $\MRel_j$ with the population covariance and variance replaced by their empirical analogs, and $\widehat{Var}(\hat{\beta})$ is the variance of the estimator, which is standard for hypothesis testing. Then,
$$n(\widehat{\MRel}_j(\hat{\beta}) - \MRel_j(\alpha))^T \hat{\Sigma}^{-1}(\widehat{\MRel}_j(\hat{\beta}) - \MRel_j(\alpha)) \overset{d}{\longrightarrow} \chi_1^2,$$
where $\alpha$ is the true coefficient.
\end{theorem}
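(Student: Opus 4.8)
The plan is to read Theorem~\ref{hypothesis} as a delta-method statement: $\widehat{\MRel}_j$ is a smooth---indeed quadratic (Lemma~\ref{MR_lm})---transformation of $\beta$, so the limiting distribution of $\widehat{\MRel}_j(\hat\beta)$ is inherited from the asymptotic normality of the least-squares estimator $\hat\beta$, and the plug-in quantity $\hat\Sigma$ studentizes that Gaussian limit into an asymptotic pivot. First I would fix the standard regression asymptotics under the assumptions listed in Appendix~\ref{AppendixD}: $\sqrt{n}(\hat\beta-\alpha)\overset{d}{\to}N(0,V)$ with $V$ the usual variance matrix, and $n\,\widehat{Var}(\hat\beta)\overset{p}{\to}V$ (this is the precise sense in which $\widehat{Var}(\hat\beta)$ is ``standard for hypothesis testing''). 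As is customary here I would condition on the design matrix, so the feature second moments $\widehat{Var}(X_j),\widehat{Cov}(X_i,X_j)$ appearing in $\widehat{\MRel}_j$ are treated as fixed and the only sampling randomness is carried by $\hat\beta$; note also that $\widehat{Cov}(Y,X_j)$ is not a separate source of randomness, since the least-squares normal equations give $\widehat{Cov}(Y,X_j)=\widehat{Cov}(X,X_j)^{T}\hat\beta$.

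Next, because $\widehat{\MRel}_j(\cdot)$ is a degree-two polynomial in $\beta$, Taylor's theorem about $\alpha$ is exact up to a purely quadratic remainder,
$$\widehat{\MRel}_j(\hat\beta)-\widehat{\MRel}_j(\alpha)=\nabla^{T}\widehat{\MRel}_j(\alpha)\,(\hat\beta-\alpha)+\frac12(\hat\beta-\alpha)^{T}H_j(\hat\beta-\alpha),$$
with $H_j$ the constant Hessian; since $\|\hat\beta-\alpha\|=O_p(n^{-1/2})$ the remainder is $O_p(n^{-1})=o_p(n^{-1/2})$ (equivalently one may use the mean-value form of the expansion at an intermediate point $\xi_n$, with $\xi_n\overset{p}{\to}\alpha$). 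The empirical moments converge to their population counterparts by the law of large numbers, so $\nabla\widehat{\MRel}_j(\cdot)\overset{p}{\to}\nabla\MRel_j(\cdot)$ uniformly on compacts and hence $\nabla\widehat{\MRel}_j(\hat\beta)\overset{p}{\to}\nabla\MRel_j(\alpha)$; moreover, under well-specification (and using the in-sample orthogonality of the least-squares residuals) one checks $\sqrt{n}\bigl(\widehat{\MRel}_j(\alpha)-\MRel_j(\alpha)\bigr)=o_p(1)$, so $\widehat{\MRel}_j(\hat\beta)-\MRel_j(\alpha)$ has the same first-order behavior as $\widehat{\MRel}_j(\hat\beta)-\widehat{\MRel}_j(\alpha)$. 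Slutsky's theorem then delivers
$$\sqrt{n}\bigl(\widehat{\MRel}_j(\hat\beta)-\MRel_j(\alpha)\bigr)\overset{d}{\to}N(0,\tau^2),\qquad \tau^2:=\nabla^{T}\MRel_j(\alpha)\,V\,\nabla\MRel_j(\alpha).$$

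It then remains to show that $\hat\Sigma$ consistently estimates $\tau^2$ (up to the $n$-scaling folded into the statement via the leading factor $n$). Since $\hat\Sigma=\nabla^{T}\widehat{\MRel}_j(\hat\beta)\,\widehat{Var}(\hat\beta)\,\nabla\widehat{\MRel}_j(\hat\beta)$ is a continuous function of $\nabla\widehat{\MRel}_j(\hat\beta)\overset{p}{\to}\nabla\MRel_j(\alpha)$ and of $n\,\widehat{Var}(\hat\beta)\overset{p}{\to}V$, the continuous mapping theorem gives $n\hat\Sigma\overset{p}{\to}\tau^2$. The invertibility of the Jacobian matrix $J$ assumed earlier in this section forces its $j$-th row $\nabla^{T}\MRel_j(\alpha)$ to be nonzero, so $\tau^2>0$ and $\hat\Sigma$ is eventually nonsingular. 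Since $\widehat{\MRel}_j$ is scalar-valued, the studentized statistic equals $\bigl(\sqrt{n}(\widehat{\MRel}_j(\hat\beta)-\MRel_j(\alpha))/\sqrt{n\hat\Sigma}\bigr)^{2}$, which converges in distribution to the square of a standard normal, i.e.\ $\chi_1^2$, with the single degree of freedom reflecting that $\MRel_j$ returns one coordinate.

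The main obstacle I anticipate is the bookkeeping of the two sources of randomness---that in $\hat\beta$ and that in the empirical moments that serve as the coefficients of $\widehat{\MRel}_j$---and in particular justifying that $\hat\Sigma$, which only reflects the variability of $\hat\beta$, is consistent for the \emph{full} asymptotic variance $\tau^2$. This is transparent under the conditional-on-design convention, where the feature moments are fixed and $\widehat{Cov}(Y,X_j)$ is absorbed through the normal equations---exactly the ``same set of assumptions for testing the coefficients'' invoked in the text; in a genuinely random-design formulation one additionally needs the well-specification/residual-orthogonality argument that pins $\widehat{\MRel}_j(\alpha)-\MRel_j(\alpha)$ at order $o_p(n^{-1/2})$, together with a non-degeneracy check so that the limit is $\chi_1^2$ rather than a degenerate or weighted version. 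The remaining ingredients---LLN for the sample moments, the CLT for $\hat\beta$, Slutsky, and the continuous mapping theorem---are routine.
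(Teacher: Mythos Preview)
Your proposal is correct and follows essentially the same route as the paper's proof in Appendix~\ref{AppendixD}: asymptotic normality of $\hat\beta$, delta method on the quadratic $\MRel_j$, consistency of $\hat\Sigma$ via the continuous mapping theorem, and Slutsky to studentize into a $\chi_1^2$. If anything, you are more explicit than the paper about the step where $\widehat{\MRel}_j(\hat\beta)$ replaces $\MRel_j(\hat\beta)$---the paper dispatches this in one line (``Since $\widehat{\MRel}_j(\hat\beta)\overset{p}{\to}\MRel_j(\hat\beta)$''), whereas your conditional-on-design reading and the normal-equations observation $\widehat{\Cov}(Y,X_j)=\widehat{\Cov}(X,X_j)^T\hat\beta$ make clear why no extra $O_p(n^{-1/2})$ term sneaks in.
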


The proof of Theorem \ref{hypothesis} is given in Appendix \ref{AppendixD}. Let us show how to apply this theorem. Suppose we want to perform the following hypothesis test,
$$H_0: \quad \MRel_j(\alpha) = 0; \quad \quad H_1: \quad \MRel_j(\alpha) \neq 0.$$ That is, suppose we want to test whether variable $j$ is not important at all.
Theorem \ref{hypothesis} implies that under $H_0$, $$\hat{Z}_j := n\widehat{\MRel}_j(\hat{\beta})^T (\nabla^T \widehat{\MRel}_j(\hat{\beta})) \widehat{Var}(\hat{\beta})\nabla \widehat{\MRel}_j(\hat{\beta}))^{-1}\widehat{\MRel}_j(\hat{\beta})\overset{d}{\longrightarrow} \chi_1^2.$$
This allows us to test if the population model reliance for the best linear model on variable $j$ is zero. If variable $j$ is not important, our testing statistic $\hat{Z}_j$ is $\chi_1^2$ distributed.

\section{Experiments}

In this section, we want to apply VIC/VID analysis to real datasets and demonstrate its usage. We work with criminal recidivism prediction data, in-vehicle coupon recommendation data and image classification data in this section.

\subsection{Experiment 1: Recidivism Prediction}

As we introduced before, the Propublica news organization found a linear model for COMPAS score that depends on race, and concluded that it is racially biased. This conclusion is unwarranted, since there could be other models that explain COMPAS well without relying on race. (See also \cite{FBL2016}.)

To investigate this possibility, we study the same dataset of 7214 defendants in Broward County, Florida. The dataset contains demographic information as well as the prior criminal history and 2-year recidivism information for each defendant. Our outcome variable is recidivism, and covariate variables are age, race, gender, prior criminal history, juvenile criminal history, and current charge.\footnote{recidivism = 1 if a defendant recidivates in two years. age = 1 if a defendant is younger than 20 years old. race = 1 if a defendant is black. gender = 1 if a defendant is a male. prior = 1 if a defendant has at least one prior crime. juvenile = 1 if a defendant has at least one juvenile crime. charge = 1 if a defendant is charged with crime.} We explore two model classes: logistic models and decision trees. In our analysis below, we find that in both classes there are indeed models that do not rely on race. Moreover, race tends to be an important variable only when prior criminal history is not important.

\subsubsection{VID for Logistic Regression}

\begin{figure}
\centering
\includegraphics[scale = 0.35]{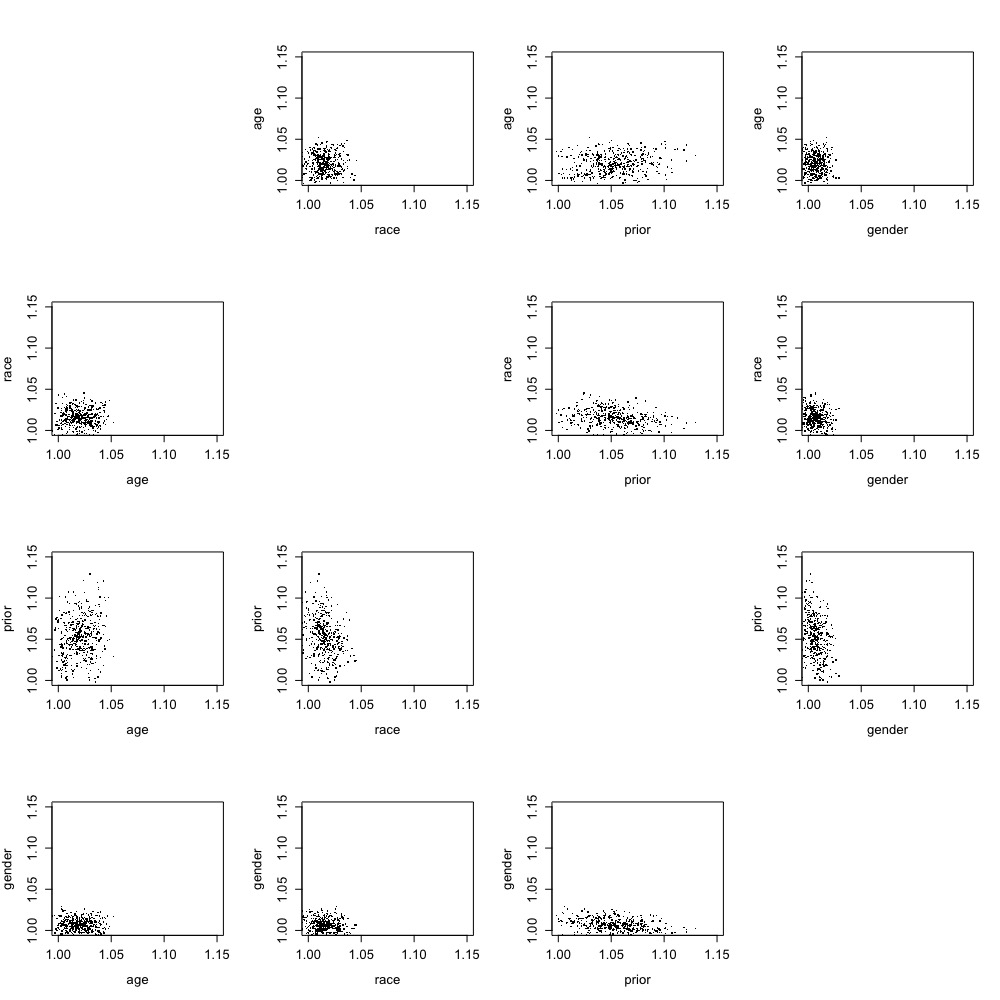}
\caption{VID for Recidivism: logistic regression. This is the projection of the VIC onto the space spanned by the four variables of interest: age, race, prior criminal history and gender. The point, say (1.02, 1.03), in the first diagram in the first row suggests that there is a model in the Rashomon set with reliances 1.02 on race and 1.03 on age.}
\label{VID for Recidivism}
\end{figure}

Since we have 6 variables, the VIC is a subset of $\R^6$. We display only VID (see Figure \ref{VID for Recidivism}) based on four variables: age, race, prior criminal history and gender. 

The first row of the VID is the projection of the VIC onto the space spanned by age and each of the other variables of interest, with the variable importance of age on the vertical axis. We can see that the variable importance of age is roughly bounded by $[1, 1.05]$, which suggests there is no good model that relies on age to a degree more than 1.05, and there exists a good model that does not rely on age. Note that the bounds are the same for any of the three diagrams in the first row.

By comparing multiple rows, we observe that the variable importance of prior criminal history has the greatest upper bound, and the variable importance of gender has the lowest upper bound. Moreover, prior criminal history has the greatest average variable importance and gender has the lowest average importance.  We also find that there exist models that do not rely on each of the four variables. However, the diagrams in the third row reveal that there are only a few models with variable importance of prior criminal history being 1, while the diagrams in the fourth row reveal that there are a lot models with variable importance of gender being 1. All of this evidence indicates that prior criminal history is the most important variable of those we considered, while gender is the least important one among the four.

We now focus on the diagram at Row 3 Column 2, which reveals the variable importance of prior criminal history in the context of the variable importance of race. We see that when importance of race is close to 1.05, which is its upper bound, the variable importance of prior criminal history is in the range of $[1.025, 1.075]$. On the other hand, while the importance of prior criminal history is close to 1.13, which is its upper bound, the variable importance of race is lower. The scatter plot has a slight downward sloping right edge. Since the boundary of the scatter plot represents models with equal loss (because they are on the boundary of the Rashomon set), the downward sloping edge suggests that as we rely less on prior criminal history, we must rely more on race to maintain the same accuracy level. In contrast, the diagram at Row 3 Column 1 has a vertical right edge, suggesting that we can reduce the reliance on prior criminal history without increasing the reliance on age.

\subsubsection{VID for Decision Trees}

\begin{figure}
\centering
\includegraphics[scale = 0.35]{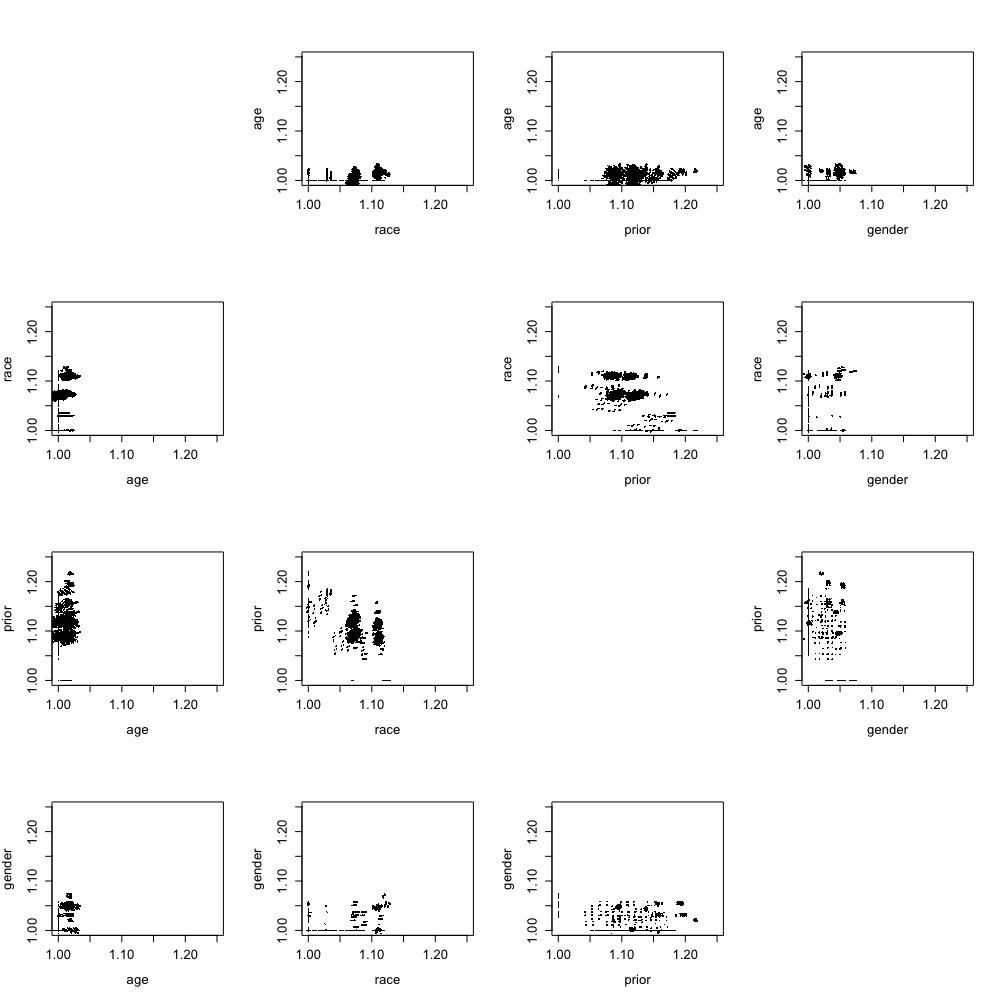}
\caption{VID for Recidivism: decision trees. This is the projective of the VIC onto the space spanned by the four variables of interest: age, race, prior criminal history and gender. Unlike Figure \ref{VID for Recidivism}, the VIC is generated by the Rashomon set that consists of the all the good decision trees instead of logistic regression models. However, the diagrams should be interpreted in the same way as before.}
\label{VID for Recidivism2}
\end{figure}

In this subsection we work on the same dataset but focus on a different class of models, the class of decision trees that split according to no more than 4 features. The restriction put on splitting aims to avoid overfitting. The VID (see Figure \ref{VID for Recidivism2}) for the same four variables of interest is given below. There is a striking difference between the VID for decision trees and logistic models: the former is discrete and has clustering structure. This demonstrates our discussion in Section 3. 

The VID for decision trees also reveals that prior criminal history is the most importance variable for decision trees. However, gender becomes more important than age for decision trees. Figure \ref{VID for Recidivism2} at Row 3 Column 2 regarding the variable importance of prior criminal history and race also suggests a substitution pattern: the importance of prior criminal history is lower when race is important, and vice versa.

\subsection{Experiment 2: In-Vehicle Coupon Recommendation}

In designing practical classification models, we might desire to include other considerations besides accuracy. For instance, if we know that when the model is deployed, one of the variables may not always be available, we might prefer to choose a model that does not depend as heavily on that variable. For instance, let us say we deploy a model that provides social services to children. In the training set we possess all the variables for all of the observations, but in deployment, the school record may not always be available. In that case, it would be helpful, all else being equal, to have a model that did not rely heavily on school record. It happens fairly often in practice that the sources of some of the variables are not trustworthy or reliable. In this case, we may face the same tradeoff between accuracy and desired model characteristics of the variable importance. This section provides an example where we create a trade-off between accuracy and variable importance; among the set of accurate model, we choose one that places less importance on a chosen variable.

We study a dataset about mobile advertisements documented in \cite{WRDLKM2017}, which consists of surveys of 752 individuals. In each survey, an individual is asked whether he or she would accept a coupon for a particular venue in different contexts (time of the day, weather, etc.) There are 12,684 data cases within the surveys.

We use a subset of this dataset, and focus on coupons for coffee shops. Acceptance of the coupon is the binary outcome variable, and the binary covariates include zeroCoffee (takes value 1 if the individual never drinks coffee), noUrgentPlace (takes value 1 if the individual has no urgent place to visit when receiving the coupon), sameDirection (takes value 1 if the destination and the coffee shop are in the same direction), expOneDay (takes value 1 if the coupon expires in one day), withFriends (takes value 1 if the individual is driving with friends when receiving the coupon), male (takes value 1 if the individual is a male), and sunny (takes value 1 if it is sunny when the individual receives the coupon).

We compute the VIC for the class of logistic regression models. Rather than providing the corresponding VID, we display only coarser information about the bounds of the variable importance in Table \ref{coupon_reliance_bound}, sorted by importance. 

\begin{table}[htp]
\begin{center}
\begin{tabular}{c|ccc}
				& upper bound & lower bound  & \\
\hline
zeroCoffee		& 1.31		& 1.19		& more important\\
noUrgentPlace		& 1.16		& 1.06	        & $\uparrow$ \\
sameDirection		& 1.07		& 1.03		& $|$\\
expOneDay		& 1.06		& 1.00		& $|$\\
withFriends		& 1.02		& 1.00		& $|$\\
male				& 1.01		& 1.00		& $\downarrow$\\
sunny			& 1.00		& 1.00		& less important
\end{tabular}
\end{center}
\caption{Bounds on model reliance within the Rashomon set. Each number represents a possibly different model. This table shows the range of variable importance among the Rashomon set.}
\label{coupon_reliance_bound}
\end{table}

Obviously, whether a person ever drinks coffee is a crucial variable for predicting if she will use a coupon for a coffee shop. Whether the person has an urgent place to go, whether the coffee shop is in the same direction as the destination, and whether the coupon is going to expire immediately are important variables for prediction too. The other variables seem to be of minimal importance.

\begin{table}[htp]
\begin{center}
\begin{tabular}{c|cc|cc}
				&  \multicolumn{2}{c|}{Least Reliance} &	\multicolumn{2}{c}{Least Error}\\
				& \multicolumn{2}{c|}{on noUrgentPlace} & \multicolumn{2}{c}{Logistic Model} \\
				& $\beta$		& $VI$ & $\beta$		& $VI$ \\
\hline
intercept			& 0.57		& 		& 0.07	& 		\\
zeroCoffee		& -2.18		& 1.27	& -2.03	& 1.26	\\
{\bf noUrgentPlace}	& {\bf 0.70}	& {\bf 1.06}	& {\bf 1.05}	& {\bf 1.10}	\\
sameDirection		& -1.30		& 1.05	& -0.93	& 1.04	\\
expOneDay		& 0.71		& 1.04	& 0.64	& 1.04	\\
withFriends		& 0.46		& 1.02	& 0.17	& 1.00	\\
male				& 0.49		& 1.01	& 0.18	& 1.00	\\
sunny			& 0.24		& 1.00	& 0.14	& 1.00	\\
				&  \multicolumn{2}{c|}{logistic loss = 2366} &	\multicolumn{2}{c}{logistic loss = 2296}\\	
\end{tabular}
\end{center}
\caption{Reliance and coefficient of the optimal model and the logistic regression estimator. This table shows that as model reliance on noUrgentPlace becomes small, model reliance for all other variables increases.}
\label{coupon_trade_off}
\end{table}

Suppose we think a priori that the variable noUrgentPlace is unreliable since the survey does not actually place people in an ``urgent'' situation. in that case, we may want to find an accurate predictive model with the least possible reliance on this variable. This is possible with VIC. Table \ref{coupon_trade_off} illustrates the trade-off.

The first and third columns in the table are the coefficient vectors for the two different models. The first column represents the model with the least reliance on noUrgentPlace within the VIC. The coefficients in the third column are for the plain logistic regression model. The second and fourth columns in the table are the model reliance vectors for the two model. The second column is the vector in VIC that minimizes the reliance on noUrgentPlace. The fourth column is the model reliance vector for the plain logistic regression model. By comparing the second and fourth column, we see that we can find an accurate model that relies on noUrgentPlace less. However, its logistic loss is 2366, which is about 3\% higher than the logistic regression model. This illustrates the trade-off between reliance and accuracy. By comparing these two columns, we also find that as we switch to a model with the least reliance on noUrgentPlace, the reliance on zeroCoffee, sameDirection and withFriends increases.

\subsection{Experiment 3: Image Classification}

The VIC analysis can be useful for any domain, including image classification and other problems that involve latent representations of data. We want to study how image classification relies on each of the latent features and how models with reasonable prediction accuracy can differ in terms of their reliance of these features.

We collected 1572 images of cats and dogs from ImageNet, and we use VGG16 features \cite{SZ2014} to analyze them. We use the convolutional base to extract features and train our own model. In particular, we get a vector of latent features of length 512 for each of our images. That is, our input dataset is $(\phi(\X), \y)$ of size 1572 $\times$ (512+1), where $\phi(\X)$ is the latent representation of the raw data $\X$. 

To get a sense of the performance of the pre-trained VGG16 model on our dataset as a benchmark, we build a fully connected neural network with two layers and train it with the data $(\phi(\X), \y)$. The accuracy of this model is about 75\% on the training set. We then apply logistic regression and perform the VIC analysis on the dataset. Given the large dimension of features and relatively small sample size, we impose an $l_1$ penalty on the loss function. With cross validation to select the best penalty parameter, we get a logistic model with non-zero coefficients on 61 of the features. The accuracy of this classifier on training sample is about $74\%$, which is approximately the same as the neural network we trained. We will restrict our analysis to the 61 features for simplicity.

We use the same method as Section 5.1 and randomly sample 417 logistic models in the Rashomon set. Moreover, we divide these models into 4 clusters. The idea is that similar models may have similar variable importance structure. We restrict our attention to the four latent features with the highest variable importance and construct the VID (see Figure \ref{VID_image}, where the colors represent the four clusters).

\begin{figure}
\centering
\includegraphics[scale = 0.35]{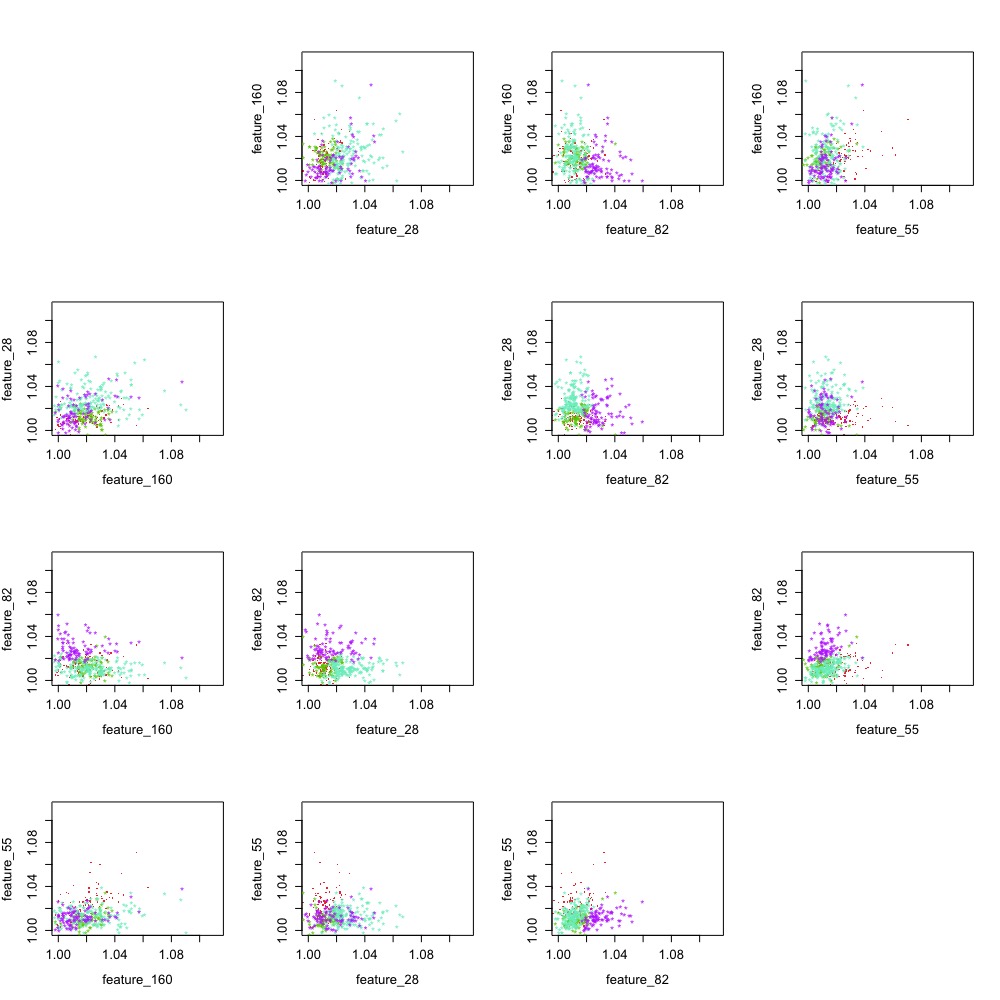}
\caption{The VID for Image Classification. The color corresponds the the clusters identified in the previous figure.}
\label{VID_image}
\end{figure}

From the VID, we gain a comprehensive view of the importance of these four latent features. From there, we would like to dig more deeply into the joint values of variable importance for models within the Rashomon set. For example, we do not know how a model that relies heavily on feature 160 and feature 28 is different from a model that does not rely on them at all. 

To answer this question, we select a representative model from each of the clusters and visualize these four models. We consider the following visualization method. Given an input image, we ask a model the counterfactual question: How would you modify the image so that you believe it is more likely to be a dog/cat? Given the functional form of the model, gradient ascent would answer this question. We choose a not-too-large step size and number of iterations so that the modified images of the four representative models are not too far from the original ones (so that we can interpret them) yet display significant differences (see Figure \ref{dog-and-cat}). 

\begin{figure}
\centering
\includegraphics[scale = 0.5]{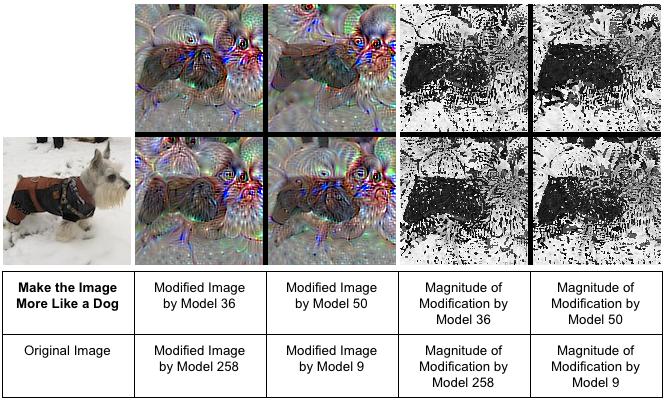}\\
\includegraphics[scale = 0.5]{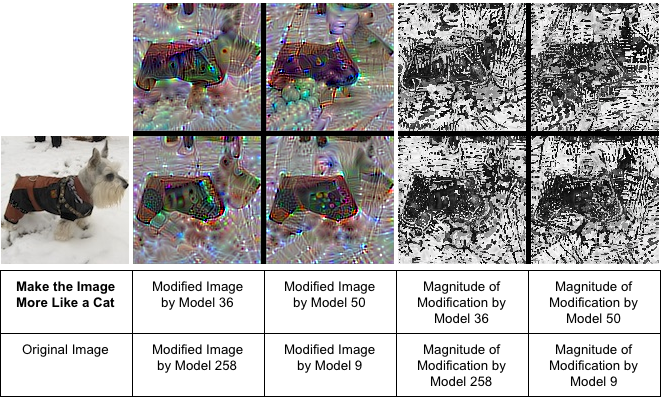}
\caption{Visualizing Representative Models. The left part of the figure is the original image. The middle part is the modified image by the four representative models, among which the upper (lower) half are the modified images with larger probability of being classified as a dog (cat). The right part of the figure are the gray-scale images that track the magnitudes of the modifications.}
\label{dog-and-cat}
\end{figure}

The upper panel represents the modification that increases the probability for being a dog, and the lower panel does the opposite. In each panel, the left part is the original image. The middle part is the output images after modification. The right part is the gray-scale images of the absolute value of the difference between the input and output images. In a gray-scale image, a darker pixel indicates larger modification. Note that the gray-scale images do not differentiate how the pixels are modified. For example, the models ``amplify'' the head to make it more like a dog, while they ``erase'' the head to make it more like a cat. The gray-scale images do not tell what operation (amplify or erase) is implemented on the image, for that we need to look at the images on the left.

Overall, the four representative models modify the input image similarly.  However, they are very different if we look at the details. In the upper panel, for example, we can see that Model 258 ``creates'' a dog head in the air above the body of the dog and Model 50 modifies this part of the input image similarly. The other two models create an eye above the body of the dog. The part of the image around the ear of the dog is another example. Model 36 does not modify much of this part, while the other models create another eye. The four models are also different when they modify the input image and make it more like a cat in the lower panel.

This experiment attempts to bridge the gap between the importance of latent features and the subtle differences among almost-equally accurate models for image classification. We believe that more work could be done along this direction to understand black-box algorithms for image classification.

 \section{Related Work}

As far as we know, there is no other work that aims to visualize the set of variable importance values for the \textit{set} of approximately-equally-good predictive models for a given problem. Instead, past work has mainly defined variable importance for \textit{single} predictive models, and our discussion in this section is aimed mostly centered around this body of work. The closest work to ours considers extreme statistics of the Rashomon set without characterizing it \citep{FRD2018,CRK2018}. 
While extreme statistics are useful to understand extreme cases, a full characterization of the set provides a much deeper understanding.

There are many variable importance techniques for considering single models. Breiman's variable importance measure for random forests \cite{Breiman2001}, which the VIC uses, as well as the partial dependence plot (PDP) \cite{Friedman2001}, partial leverage plot (PLP) \cite{VW1981}, and partial importance (PI) \cite{CMB2018} are four such examples. Some of these (e.g., the PDP) looks at the local importance of a variable while Breiman's variable importance measures global importance. The PDP measures importance by the difference in prediction outcomes while the VIC measures by the difference in prediction losses. The PLP is defined only for linear models, unlike the other variable importance measures.

The vast majority of work about variable importance is {\em posthoc}, meaning that it addresses a single model that has been chosen prior to the variable importance analysis. These works do not explore the class of models that could have been chosen, and are approximately equally good to the one that was chosen. 

Probably the most direct posthoc method to investigate variable importance is to simply look at the coefficients or weights of the model, after normalizing the features. For example, a zero coefficient of a variable indicates no importance, while a large coefficient indicates greater importance. This interpretation is common for linear models (\cite{Breimanetal2001}, \cite{GML2003}), and is also applicable to some of the non-linear models. This is a posthoc analysis of variable importance: It tells that a variable is important because the prediction is sensitive to the value of this variable, {\em if we select this predictive model}. Yet it does not posit that this variable is important to every good predictive model, and we could have selected another equally good predictive model in which this variable is not important at all.

In addition to looking at the coefficients or weights, there are many more sophisticated posthoc analyses of variable importance in various domains. Visual saliency \cite{HKP2007}, for instance, is not a measure of variable importance that has been been extended to an entire class of good models. Visual saliency tells us only what part of an image a single model is using. It does not show what part of that image every good model is choosing. However, it is possible to extend the VIC idea to visual saliency, where one would attempt to illustrate the range of saliency maps arising from the set of good predictive models.

There are several posthoc methods of visualizing variable importance. For linear models, the partial leverage plot \cite{VW1981} is a tool that visualizes the importance of a variable. To understand the importance of a variable, it extracts the information in this variable and the outcome that is not explained by the rest of the variables. The shape of the scatter plot of this extracted information informs us of the importance of the variable. The partial dependence plot \cite{Friedman2001} is another method that visualizes the impact of a variable on the average prediction. By looking at the steepness of the plot, one can tell the magnitude of the change of predicted outcome caused by a local change of a variable. One recent attempt to visualize variable importance is made by \cite{CMB2018}. They introduce a local variable importance measure and propose visualization tools to understand how changes in a feature affect model performance both on average and for individual data points. These methods, while useful, take a given predictive model as a primitive and visualize variable importance with respect to this single model. They neglect the existence of other almost-equally accurate models and the fact that variable importance can be different with respect to these models.

\section{Conclusion}

In this paper, we propose a new framework to analyze and visualize variable importance. We analyze this for linear models, and extend to non-linear problems including logistic regression and decision trees. This framework is useful if we want to study the importance of a variable in the context of the importance of other variables. It informs us, for example, how the importance of a variable changes when another variable becomes more important as we switch among a set of almost-equally-accurate models. We show connections from variable importance to hypothesis testing for linear models, and the trade-off between accuracy and model reliance.
\newpage

\bibliographystyle{informs2014}
\bibliography{reference}

\begin{thebibliography}{15}
\providecommand{\natexlab}[1]{#1}
\providecommand{\url}[1]{\texttt{#1}}
\providecommand{\urlprefix}{URL }

\bibitem[{Breiman(2001)}]{Breiman2001}
Breiman L (2001) Random forests. \emph{Machine learning} 45(1):5--32.

\bibitem[{Breiman et~al.(2001)}]{Breimanetal2001}
Breiman L, et~al. (2001) Statistical modeling: The two cultures (with comments
  and a rejoinder by the author). \emph{Statistical science} 16(3):199--231.

\bibitem[{Casalicchio et~al.(2018)Casalicchio, Molnar, \protect\BIBand{}
  Bischl}]{CMB2018}
Casalicchio G, Molnar C, Bischl B (2018) Visualizing the feature importance for
  black box models. \emph{arXiv preprint arXiv:1804.06620} .

\bibitem[{Coker et~al.(2018)Coker, Rudin, \protect\BIBand{} King}]{CRK2018}
Coker B, Rudin C, King G (2018) A theory of statistical inference for ensuring
  the robustness of scientific results. \emph{arXiv preprint arXiv:1804.08646}
  .

\bibitem[{Fisher et~al.(2018)Fisher, Rudin, \protect\BIBand{}
  Dominici}]{FRD2018}
Fisher A, Rudin C, Dominici F (2018) Model class reliance: Variable importance
  measures for any machine learning model class, from the" rashomon"
  perspective. \emph{arXiv preprint arXiv:1801.01489} .

\bibitem[{Flores et~al.(2016)Flores, Bechtel, \protect\BIBand{}
  Lowenkamp}]{FBL2016}
Flores AW, Bechtel K, Lowenkamp CT (2016) False positives, false negatives, and
  false analyses: A rejoinder to machine bias: There's software used across the
  country to predict future criminals. and it's biased against blacks.
  \emph{Fed. Probation} 80:38.

\bibitem[{Friedman(2001)}]{Friedman2001}
Friedman JH (2001) Greedy function approximation: a gradient boosting machine.
  \emph{Annals of statistics} 1189--1232.

\bibitem[{Gevrey et~al.(2003)Gevrey, Dimopoulos, \protect\BIBand{}
  Lek}]{GML2003}
Gevrey M, Dimopoulos I, Lek S (2003) Review and comparison of methods to study
  the contribution of variables in artificial neural network models.
  \emph{Ecological modelling} 160(3):249--264.

\bibitem[{Harel et~al.(2007)Harel, Koch, \protect\BIBand{} Perona}]{HKP2007}
Harel J, Koch C, Perona P (2007) Graph-based visual saliency. \emph{Advances in
  neural information processing systems}, 545--552.

\bibitem[{Hayashi(2000)}]{Hayashi2000}
Hayashi F (2000) \emph{Econometrics} (New Jersey: Princeton University Press).

\bibitem[{Nevo \protect\BIBand{} Ritov(2017)}]{NR2017}
Nevo D, Ritov Y (2017) Identifying a minimal class of models for
  high--dimensional data. \emph{Journal of Machine Learning Research}
  18(24):1--29.

\bibitem[{Simonyan \protect\BIBand{} Zisserman(2014)}]{SZ2014}
Simonyan K, Zisserman A (2014) Very deep convolutional networks for large-scale
  image recognition. \emph{arXiv preprint arXiv:1409.1556} .

\bibitem[{Tulabandhula \protect\BIBand{} Rudin(2013)}]{TR2013}
Tulabandhula T, Rudin C (2013) Machine learning with operational costs.
  \emph{The Journal of Machine Learning Research} 14(1):1989--2028.

\bibitem[{Velleman \protect\BIBand{} Welsch(1981)}]{VW1981}
Velleman PF, Welsch RE (1981) Efficient computing of regression diagnostics.
  \emph{The American Statistician} 35(4):234--242.

\bibitem[{Wang et~al.(2017)Wang, Rudin, Doshi-Velez, Liu, Klampfl,
  \protect\BIBand{} MacNeille}]{WRDLKM2017}
Wang T, Rudin C, Doshi-Velez F, Liu Y, Klampfl E, MacNeille P (2017) A bayesian
  framework for learning rule sets for interpretable classification. \emph{The
  Journal of Machine Learning Research} 18(1):2357--2393.

\end{thebibliography}

\appendix
\section*{Appendices}

\section{Proof of Corollary \ref{scale}}\label{AppendixA}
\counterwithin{theorem}{section}
\setcounter{theorem}{0}

We first look at how the center of the Rashomon set is affected by the scale of data.
\begin{lemma}\label{A1}
Let $f_{\beta^*}$ be the linear model that minimizes the expected loss for $(X,Y)$ and $f_{\tilde{\beta}^*}$ for $(\tilde{X},\tilde{Y})$. If follows that $\tilde{\beta}^* = tS^{-1} \beta^*$ and $L(f_{\tilde{\beta}^*}; \tilde{X}, \tilde{Y}) = t^2 L(f_{\beta^*}, X, Y)$, where $S = diag(s_1, \cdots, s_p)$.
\end{lemma}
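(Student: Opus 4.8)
The plan is to exploit the fact that rescaling the covariates and the outcome induces an invertible \emph{linear} reparametrization of the model class $\F_{lm}$ under which every expected-loss value is multiplied by the same positive constant. Since $c = 0$ throughout this subsection, the expected loss of $f_\gamma$ on the rescaled data is
$$L(f_\gamma; \tilde{X}, \tilde{Y}) = \E[(\tilde{Y} - \tilde{X}^T \gamma)^2] = \E[(tY - X^T S\gamma)^2],$$
where I have used $\tilde{X} = SX$ and $\tilde{Y} = tY$ with $S = \mathrm{diag}(s_1, \dots, s_p)$.

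The first step is the substitution $\gamma = tS^{-1}\beta$. Then $X^T S\gamma = t X^T \beta$, so
$$L(f_{tS^{-1}\beta}; \tilde{X}, \tilde{Y}) = \E[(tY - tX^T\beta)^2] = t^2\,\E[(Y - X^T\beta)^2] = t^2 L(f_\beta; X, Y).$$
Because $t > 0$ and each $s_i > 0$, the map $\beta \mapsto tS^{-1}\beta$ is a bijection of $\R^p$, hence of $\F_{lm}$ onto itself, and it scales every loss value by the fixed positive constant $t^2$. Consequently it carries minimizers to minimizers, i.e.\ $\tilde{\beta}^* = tS^{-1}\beta^*$; and evaluating the displayed identity at $\beta = \beta^*$ gives $L(f_{\tilde{\beta}^*}; \tilde{X}, \tilde{Y}) = t^2 L(f_{\beta^*}; X, Y)$, which is the second claim.

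As an alternative derivation (and a consistency check) one can instead invoke the normal equations $\beta^* = \E[XX^T]^{-1}\E[XY]$ directly: $\tilde{\beta}^* = \E[\tilde{X}\tilde{X}^T]^{-1}\E[\tilde{X}\tilde{Y}] = (S\E[XX^T]S)^{-1}(tS\E[XY]) = tS^{-1}\E[XX^T]^{-1}\E[XY] = tS^{-1}\beta^*$, and then substitute into $L$ to recover the factor $t^2$. I would use the substitution argument in the write-up, since it avoids assuming $\E[XX^T]$ is invertible and keeps the bookkeeping to a single line. There is no genuine obstacle here; the only point requiring a moment's care is that the reparametrization is a true bijection (which needs $s_i > 0$ and $t \neq 0$, both assumed) so that ``minimizer maps to minimizer'' is legitimate, and that the constant $t^2$ is correctly pulled out of the square.
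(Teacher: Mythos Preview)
Your proposal is correct. The paper proves the first claim by the normal-equations route you list as your alternative: it writes $\tilde{\beta}^* = \E[\tilde{X}\tilde{X}^T]^{-1}\E[\tilde{X}\tilde{Y}]$, substitutes $\E[\tilde{X}\tilde{X}^T] = S\E[XX^T]S^T$ and $\E[\tilde{X}\tilde{Y}] = tS\E[XY]$, and simplifies. Your primary argument instead establishes the identity $L(f_{tS^{-1}\beta};\tilde{X},\tilde{Y}) = t^2 L(f_\beta;X,Y)$ for \emph{all} $\beta$ and then invokes bijectivity of $\beta \mapsto tS^{-1}\beta$ to transport the minimizer. This is a slightly different and arguably cleaner route: it yields both conclusions of the lemma in one stroke, it does not require $\E[XX^T]$ to be invertible (or the minimizer to be unique), and it is exactly the content the paper proves separately as its next lemma (Lemma~\ref{A2}). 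The paper's own proof of Lemma~\ref{A1}, by contrast, only writes out the $\tilde{\beta}^*$ computation and leaves the loss-scaling claim implicit. Either approach is fine here; yours just front-loads the key identity.
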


\begin{proof}
Since $\E[\tilde{X}\tilde{X}^T] = S \E[XX^T] S^T$ and $\E[\tilde{X}\tilde{Y}] = tS\E[XY]$, it follows that 
\begin{align*}
\tilde{\beta}^* 
= & \E[\tilde{X}\tilde{X}^T]^{-1}\E[\tilde{Y}\tilde{X}^T] \\
= & (S^{-T} \E[XX^T]^{-1} S^{-1}) (tS\E[XY])\\
= & tS^{-1}\E[XX^T]\E[XY] \\
= & tS^{-1}\beta^*
\end{align*}
\end{proof}

Lemma \ref{A1} shows that if the data $(X, Y)$ is scaled by $(S, t)$, then the center of the Rashomon set is scaled by $tS^{-1}$. We will show that in addition to the center, the whole Rashomon set is scaled by the same factor.

\begin{lemma}\label{A2}
For any $\beta \in \R^p$, let $\tilde{\beta} := tS^{-1} \beta$. Then $L(f_{\tilde{\beta}}; \tilde{X}, \tilde{Y}) = t^2 L(f_\beta; X, Y)$.
\end{lemma}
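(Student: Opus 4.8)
\textbf{Proof plan for Lemma \ref{A2}.}

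The plan is to compute the loss of $f_{\tilde\beta}$ directly from its definition and show it factors as $t^2$ times the loss of $f_\beta$. Since we are in the subsection where the regularization parameter $c$ is set to $0$, the relevant loss is simply $L(f_\beta; X, Y) = \E[(Y - X^T\beta)^2]$, so the whole statement reduces to a pointwise identity on the residuals, followed by taking expectations.

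First I would record the effect of the rescaling on the model's linear predictor. Writing $S = \mathrm{diag}(s_1,\dots,s_p)$ so that $\tilde X = SX$, and using $\tilde\beta = tS^{-1}\beta$, the key computation is
\begin{equation*}
\tilde X^T \tilde\beta = (SX)^T\,(tS^{-1}\beta) = t\, X^T S^T S^{-1}\beta = t\, X^T\beta,
\end{equation*}
where the last equality uses that $S$ is diagonal, hence symmetric with $S^TS^{-1}=I$. Combined with $\tilde Y = tY$, this gives the residual identity $\tilde Y - \tilde X^T\tilde\beta = t(Y - X^T\beta)$.

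Then I would square both sides to get $(\tilde Y - \tilde X^T\tilde\beta)^2 = t^2 (Y - X^T\beta)^2$ and take expectations, yielding $L(f_{\tilde\beta};\tilde X,\tilde Y) = \E[(\tilde Y - \tilde X^T\tilde\beta)^2] = t^2\,\E[(Y-X^T\beta)^2] = t^2 L(f_\beta; X, Y)$, which is the claim. There is essentially no obstacle here; the only points requiring a word of care are that $S$ is invertible (guaranteed since $s_i > 0$) and that the argument uses $c=0$ so that the ridge penalty term, which would scale differently, does not appear. (If one wished to track the $c>0$ case, the penalty $c\lVert\tilde\beta\rVert^2 = c t^2\lVert S^{-1}\beta\rVert^2$ does not equal $t^2 c\lVert\beta\rVert^2$ in general, which is exactly why the corollary is stated with $c=0$.) Finally, note that Lemma \ref{A2} specialized to $\beta = \beta^*$ recovers the second claim of Lemma \ref{A1}, and together with Lemma \ref{A1} it shows the entire Rashomon ellipsoid for $(\tilde X,\tilde Y)$ is the image under $\beta \mapsto tS^{-1}\beta$ of the Rashomon ellipsoid for $(X,Y)$ with the same $\epsilon$; this is the fact that will drive the proof of Corollary \ref{scale}.
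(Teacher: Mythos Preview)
Your proof is correct and follows essentially the same approach as the paper: both arguments substitute $\tilde X = SX$, $\tilde Y = tY$, $\tilde\beta = tS^{-1}\beta$ into the (unregularized) squared loss and use the cancellation $S^TS^{-1}=I$ coming from $S$ being diagonal. The only cosmetic difference is that you work pointwise with the residual identity $\tilde Y - \tilde X^T\tilde\beta = t(Y - X^T\beta)$ before squaring and taking expectations, whereas the paper first expands the loss as $\tilde\beta^T\E[\tilde X\tilde X^T]\tilde\beta - 2\E[\tilde Y\tilde X^T]\tilde\beta + \E[\tilde Y^2]$ and simplifies term by term; your route is slightly cleaner but not substantively different.
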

\begin{proof}
By definition,
\begin{align*}
L(f_{\tilde{\beta}}; \tilde{X}, \tilde{Y}) = & \tilde{\beta}^T \E[\tilde{X}\tilde{X}^T]\tilde{\beta} - 2\E[\tilde{Y}\tilde{X}^T]\tilde{\beta} + \E[\tilde{Y}^2] \\
= & (t\beta^T S^{-T}) S \E[XX^T] S^T (tS^{-1}\beta) \\
    & \hspace{1cm} - 2t\E[YX^T] S^T tS^{-1}\beta + t^2\E[Y^2] \\
= & t^2(\beta^T \E[XX^T]\beta - 2\E[YX^T]\beta + \E[Y^2]) \\
= & t^2 L(f_\beta; X, Y)
\end{align*}
\end{proof}

\begin{lemma}\label{A3}
If $f_\beta \in \Rash(X, Y)$, then $f_{\tilde{\beta}} \in \Rash(\tilde{X}, \tilde{Y})$, where $\tilde{\beta} = tS^{-1} \beta$.
\end{lemma}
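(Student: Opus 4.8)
The plan is to combine the two preceding lemmas directly. Lemma \ref{A1} identifies $f_{\tilde\beta^*}$ as the minimizer of the expected loss for the scaled data $(\tilde X,\tilde Y)$ and shows $L(f_{\tilde\beta^*};\tilde X,\tilde Y)=t^2 L(f_{\beta^*};X,Y)$. Lemma \ref{A2} shows that the map $\beta\mapsto\tilde\beta:=tS^{-1}\beta$ multiplies every model's expected loss by the constant factor $t^2$. So the whole argument is a one-line chain of equivalences; there is essentially no obstacle here, the work having been done in Lemmas \ref{A1}--\ref{A3}.

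Concretely, first I would recall that by definition $f_\beta\in\Rash(X,Y)$ means $L(f_\beta;X,Y)\le(1+\epsilon)L(f_{\beta^*};X,Y)$, where $f_{\beta^*}$ is the expected-loss minimizer for $(X,Y)$. Next, I would multiply both sides of this inequality by $t^2>0$, which preserves the inequality, to get $t^2 L(f_\beta;X,Y)\le(1+\epsilon)\,t^2 L(f_{\beta^*};X,Y)$. Then I would apply Lemma \ref{A2} to the left-hand side (with this particular $\beta$) to rewrite $t^2 L(f_\beta;X,Y)=L(f_{\tilde\beta};\tilde X,\tilde Y)$, and apply Lemma \ref{A1} to the right-hand side to rewrite $t^2 L(f_{\beta^*};X,Y)=L(f_{\tilde\beta^*};\tilde X,\tilde Y)$. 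This yields $L(f_{\tilde\beta};\tilde X,\tilde Y)\le(1+\epsilon)L(f_{\tilde\beta^*};\tilde X,\tilde Y)$.

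Finally, since Lemma \ref{A1} tells us that $f_{\tilde\beta^*}$ is precisely the benchmark model defining $\Rash(\tilde X,\tilde Y)$, the displayed inequality is exactly the membership condition $f_{\tilde\beta}\in\Rash(\tilde X,\tilde Y)$, which completes the proof. The only point requiring a word of care is making sure the benchmark used on each side is the correct loss-minimizer for that dataset — this is guaranteed by Lemma \ref{A1}, and is why the corollary fixes $f_{\beta^*}$ (resp. $f_{\tilde\beta^*}$) to be the expected-loss minimizer rather than an arbitrary benchmark. Everything else is just scaling an inequality by a positive constant, so I do not anticipate any real difficulty; the substance of Corollary \ref{scale} comes afterward, when Lemma \ref{A3} is fed into the $\MRel$ formula of Lemma \ref{MR_lm} to track how the reliance values scale by $t^2$.
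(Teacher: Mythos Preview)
Your proposal is correct and follows essentially the same approach as the paper's own proof: invoke Lemma~\ref{A2} for the general $\beta$, Lemma~\ref{A1} for the benchmark $\beta^*$, and multiply the Rashomon inequality by $t^2$. If anything, you are slightly more careful than the paper in explicitly noting that Lemma~\ref{A1} guarantees $f_{\tilde\beta^*}$ is the correct benchmark for $\Rash(\tilde X,\tilde Y)$.
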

\begin{proof}
We know from Lemma \ref{A2} $L(f_{\tilde{\beta}}; \tilde{X}, \tilde{Y}) = t^2 L(f_\beta, X,Y)$, and from Lemma $A1$ $L(f_{\tilde{\beta}^*}; \tilde{X}, \tilde{Y}) = t^2 L(f_\beta^*, X,Y)$
The fact that $f_\beta \in \Rash(X, Y)$ implies that $L(f_\beta; X,Y) \leq L(f_{\beta^*}; X,Y)(1+\epsilon)$. Multiply the inequality by $t^2$ would yield $L(f_{\tilde{\beta}}; \tilde{X}, \tilde{Y}) \leq L(f_{\tilde{\beta}^*}; \tilde{X}, \tilde{Y})(1+\epsilon)$.
\end{proof}

Once we know how the Rashomon set is scaled, we apply the model reliance function to see how the VIC is scaled.

\begin{proof}[Proof of Corollary \ref{scale}]
Recall that $\mr = \MRel(f_\beta; X, Y)$ with
\begin{align*}
\mr_j(f_\beta; X, Y) = & 2Cov(Y, X_j)\beta_j - 2 \beta_{-j}^T Cov(X_{-j}, X_j)\beta_j \\
= & 2Cov(Y, X_j)\beta_j - 2 \beta^T Cov(X, X_j)\beta_j + 2 Var(X_j)\beta_j^2,
\end{align*}
for $j = 1, \cdots, p.$
Then if a vector $\mr \in \VIC(X,Y)$, there exists a model $f_\beta \in \Rash(X,Y)$ by definition. By Lemma \ref{A3}, $f_{\tilde{\beta}} \in \Rash(\tilde{X}, \tilde{Y})$, where $\tilde{\beta} = tS^{-1}\beta$. It follows that
\begin{align*}
\widetilde{\mr}_j(f_{\tilde{\beta}};\tilde{X}, \tilde{Y}) = & 2Cov(\tilde{Y}, \tilde{X}_j)\tilde{\beta}_j - 2 \tilde{\beta}^T Cov(\tilde{X}, \tilde{X}_j)\tilde{\beta}_j + 2 Var(\tilde{X}_j)\tilde{\beta}_j \\
= & 2tCov(Y, X_j) s_j  (t s_j^{-1} \beta_j)  \\
  & - 2 (t\beta S^{-T}) S Cov(X, X_j) s_j (ts_j^{-1} \beta_j) \\
  & + 2 s_j^2 Var(X_j)(t^2s_j^{-2}\beta_j^2)\\
= & t^2 \mr_j(f_\beta; X,Y).
\end{align*}
That is, the vector $t^2 \mr \in \VIC(\tilde{X}, \tilde{Y})$.
\end{proof}

\section{Proof of Corollary \ref{VIC_lm_special_thm}}\label{AppendixB}
\counterwithin{theorem}{section}
\setcounter{theorem}{0}

\begin{proof}
For simplicity, let $\sigma_{ij} = \E(X_iX_j)$ and $\sigma_{iY} = \E(YX_i)$. Equation \ref{Rash_lm} can be written as\footnote{Simplify equation \ref{Rash_lm} with the fact that $\E(X_iX_j) = 0$ for all $i \neq j$ to get the first inequality below. By completing the squares, we get the second inequality, where the terms in the parentheses is the minimum loss.
\begin{align*}
\sum_{i=1}^p \left( (\sigma_{ii}+c)\beta_i^2 - 2\sigma_{iY}\beta_i \right) + \E(Y^2) & \leq \left(\E(Y^2) + \sum_{i=1}^p \frac{\sigma_{iY}^2}{\sigma_{ii}+c}\right)(1 + \epsilon) \nonumber \\
\sum_{i=1}^p (\sigma_{ii}+c)(\beta_i - \frac{\sigma_{iY}}{\sigma_{ii}+c})^2 & \leq \epsilon \left(\E(Y^2) + \sum_{i=1}^p \frac{\sigma_{iY}^2}{\sigma_{ii}+c}\right) \nonumber \\
\end{align*}
}
\begin{equation}\label{Rash_lm_special}
\sum_{i=1}^p \frac{(\beta_i - \frac{\sigma_{iY}}{\sigma_{ii}+c})^2}{\left(\sqrt{\frac{\epsilon L^*}{\sigma_{ii}+c}}\right)^2} \leq 1.
\end{equation}
Equation \ref{mr_lm} is also simplified as 
\begin{equation}\label{MR_lm_special}
\mr_j = 2\sigma_{jY}\beta_j.
\end{equation}
By plugging equation \ref{MR_lm_special} into \ref{Rash_lm_special}, we get the expression for VIC with uncorrelated features,\begin{align}\label{VIC_lm_special}
\sum_{i=1}^p \frac{(\mr_i - \frac{2\sigma_{iY}^2}{\sigma_{ii}+c})^2}{\left(2\sigma_{iY} \sqrt{\frac{\epsilon L^*}{\sigma_{ii}+c}}\right)^2} \leq 1,
\end{align}
This suggests that VIC with uncorrelated features is an ellipsoid with the center and axes specified in corollary \ref{VIC_lm_special_thm}. It follows that $r_i > r_j$ if and only if 
$$\frac{\abs{\sigma_{iY}}}{\sqrt{\sigma_{ii}+c}} > \frac{\abs{\sigma_{jY}}}{\sqrt{\sigma_{jj}+c}}.$$
By Corollary \ref{scale}, we can rescale the data $(X, Y)$ when $c = 0$. It follows that $r_i > r_j$ if and only if $$\abs{\rho_{iY}} > \abs{\rho_{jY}}.$$
\end{proof}

\section{Approximated VIC for Linear Models: }\label{AppendixC}

In Section \ref{VIC_approx_section}, we have approximated the VIC by an ellipsoid characterized by Equation \ref{VIC_lm_approx}. In particular, if we choose to invoke the Taylor approximation at the center of the Rashomon set, setting $\bar{\beta} = \beta^*$, we get the following equation by Theorem \ref{VIC_lm_approx_thm},
\begin{align*}
\widetilde{\mr}^T J^{-T} \E [XX^T + cI] J^{-1} \widetilde{\mr} - 2(\E[YX^T] - \beta^{*T}\E[XX^T+cI] )J^{-1}\widetilde{\mr} \leq \epsilon L^*,
\end{align*}
where $\widetilde{\mr} = \mr - \overline{\mr}$ and $L^* = L(f_{\beta^*})$.
Our purpose here is to understand the shape of the approximated VIC numerically. Since the approximated VIC is an ellipsoid, the key is to find its center, radii, and how it is rotated.

Let $A = J^{-T} \E [XX^T + cI] J^{-1} \in \R^{p*p}$. Note first that $A$ is symmetric and positive semi-definite. Therefore, it can be written as $A = Q \Lambda Q^T$ where $Q$ is an orthogonal matrix with each column being an eigenvector of $A$ and $\Lambda$ is a diagonal matrix that consists of the corresponding eigenvalues $\lambda_1, \cdots, \lambda_p$. {\em Assume $A$ is positive definite, } then $\lambda_j > 0$ for all $j$. It follows that 
$$\widetilde{\mr}^T Q \Lambda Q^T \widetilde{\mr} + 2(\beta^{*T}\E[XX^T] - \E[YX^T])J^{-1}\widetilde{\mr} \leq \epsilon L^*.$$
Let $\widehat{\mr} = Q^T \widetilde{\mr}$, and $B = (\E[YX^T] - \beta^{*T}\E[XX^T+cI] )J^{-1}Q= [b_1, \cdots, b_p] \in \R^p$. We have
$$\sum_{j=1}^p \left( \lambda_j \widehat{\mr}_j^2 - 2b_j\widehat{\mr}_j \right) \leq \epsilon L^*,$$ or equivalently,
$$\sum_{j=1}^p \lambda_j(\widehat{\mr}_j - \frac{b_j}{\lambda_j})^2 \leq \epsilon L^* + \sum_{j=1}^p \frac{b_j^2}{\lambda_j}.$$
\begin{equation}\label{VIC_approx_ellipsoid}
\sum_{j=1}^p \frac{(\widehat{\mr}_j - \frac{b_j}{\lambda_j})^2}{\left(\sqrt{\frac{\epsilon L^* + \sum_{j=1}^p b_j^2 / \lambda_j}{\lambda_j}}\right)^2} \leq 1.
\end{equation}
The expression \ref{VIC_approx_ellipsoid} implies that the approximated VIC is an ellipsoid. Unlike the VIC for uncorrelated features, the ellipsoid no longer parallels the coordinate axes. The eigenvectors of the matrix $J^{-T} \E [XX^T +cI] J^{-1}$ determine how the VIC ellipse is rotated. The eigenvalues together with $b_j$'s determine the center and radii of the ellipsoid.

\section{Proof of Theorem \ref{hypothesis}}\label{AppendixD}

Consider the dataset $\{(x_i^T, y_i)\}_{i=1}^{n}$, where $x_i^T$ is the the features of observation $i$, $y_i$ is the outcome. We assume the following:

\begin{assumption}[Linear specification]
For all $i = 1, \cdots, n$, $y_i = x_i^T \alpha + \epsilon_i$, where $\alpha$ is the true coefficient and $\epsilon_i$'s are the error terms.
\end{assumption}

\begin{assumption}[IID]
$\{x_i^T, \epsilon_i\}_{i=1}^n$ are i.i.d. 
\end{assumption}

\begin{assumption}[Exogeneity]
For all $i = 1, \cdots, n$, $\E(g_i) = 0$, where $g_i := x_i \epsilon_i$.
\end{assumption}

\begin{assumption}[Rank condition]
For all $i = 1, \cdots, n$, $\Sigma_{xx} := \E(x_i x_i^T)$ is non-singular.
\end{assumption}

\begin{assumption}[Finite second moment]
For all $i = 1, \cdots, n$, $S := \E(g_i g_i^T)$ is finite.
\end{assumption}

\begin{assumption}[Consistent estimator of $S$]
There is an estimator $\hat{S}$ with $\hat{S} \overset{p}{\longrightarrow} S$.
\end{assumption}

\begin{assumption}[Non-singularity of $S$]
$S$ is non-singular.
\end{assumption}

Since the purpose is not to prove the asymptotic properties of the least squares estimator, we assume directly the existence of $\hat{S}$ instead of deriving it. We begin by a standard result.

\begin{lemma}\label{step1}
The least squares estimator $\hat{\beta}$ satisfies $$\sqrt{n}(\hat{\beta} - \alpha) \overset{d}{\longrightarrow} \mathcal{N}(0, \Var(\hat{\beta})),$$
where $\Var(\hat{\beta}) = \Sigma_{xx}^{-1} S \Sigma_{xx}^{-1}$.
\end{lemma}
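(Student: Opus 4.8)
The plan is to reproduce the textbook argument for asymptotic normality of the ordinary least squares estimator, feeding the stated assumptions in directly rather than deriving them from lower-level primitives. First I would substitute the Linear specification, $y_i = x_i^T\alpha + \epsilon_i$, into the closed form $\hat\beta = (\X^T\X)^{-1}\X^T\y$. Since $\X^T\y = \sum_{i=1}^n x_i y_i = (\X^T\X)\alpha + \sum_{i=1}^n x_i\epsilon_i$, this rearranges to
\[
\sqrt n\,(\hat\beta - \alpha) = \Bigl(\tfrac1n\sum_{i=1}^n x_i x_i^T\Bigr)^{-1}\tfrac1{\sqrt n}\sum_{i=1}^n g_i,
\]
where $g_i = x_i\epsilon_i$ as in the Exogeneity assumption. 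I would then analyze the two factors on the right separately and recombine them.

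For the first factor, the IID assumption together with the law of large numbers gives $\tfrac1n\sum_i x_i x_i^T \overset{p}{\longrightarrow}\Sigma_{xx}:=\E(x_i x_i^T)$; since $\Sigma_{xx}$ is non-singular by the Rank condition, matrix inversion is continuous there, so the continuous mapping theorem yields $\bigl(\tfrac1n\sum_i x_i x_i^T\bigr)^{-1}\overset{p}{\longrightarrow}\Sigma_{xx}^{-1}$. For the second factor, the IID, Exogeneity, and Finite second moment assumptions are precisely the hypotheses of the multivariate central limit theorem --- i.i.d.\ summands $g_i$ with mean zero and finite covariance $S=\E(g_i g_i^T)$ --- so $\tfrac1{\sqrt n}\sum_i g_i \overset{d}{\longrightarrow}\mathcal N(0,S)$.

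Finally I would combine the two limits with Slutsky's theorem, applied to the matrix--vector product: a sequence converging in probability to the constant matrix $\Sigma_{xx}^{-1}$, times a sequence converging in distribution to $\mathcal N(0,S)$, converges in distribution to $\Sigma_{xx}^{-1}\mathcal N(0,S)=\mathcal N(0,\Sigma_{xx}^{-1}S\Sigma_{xx}^{-1})$, which is the asserted limit with $\Var(\hat\beta)=\Sigma_{xx}^{-1}S\Sigma_{xx}^{-1}$. I do not expect a genuine obstacle: the statement is classical and all of its analytic content has been pushed into the assumptions. The only points meriting care are invoking non-singularity of $\Sigma_{xx}$ before inverting inside the continuous mapping step, and checking that Slutsky's theorem applies to this matrix--vector product. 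Note that the Consistent estimator of $S$ and Non-singularity of $S$ assumptions play no role in this lemma; they are needed only later, when $\hat\Sigma$ is formed and inverted in the proof of Theorem \ref{hypothesis}.
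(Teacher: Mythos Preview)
Your argument is correct and is exactly the standard textbook derivation the paper has in mind: the paper does not actually prove this lemma but simply cites it as ``a property of the least squares estimator'' and refers the reader to \cite{Hayashi2000}. Your observation that Assumptions D.6 and D.7 are not used here is also accurate; they enter only when $\hat\Sigma$ is built and inverted downstream.
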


This is a property of the least squares estimator. \footnote{The proofs in Appendix \ref{AppendixD} are standard and can be found in many textbooks. (See, for example, \cite{Hayashi2000}.) Hence we omit the proofs in this appendix.}

Suppose we are interest in the model reliance of the $j^{th}$ feature. For the linear model $f_\beta$, the reliance is given by the function below: $$\MRel_j(\beta) = 2\Cov(Y, X_j)\beta_j - 2 \beta^T \Cov(X, X_j)\beta_j + 2 \Var(X_j)\beta_j^2.$$
Notice that this function relies on the population distribution.

By Delta Method and Lemma \ref{step1}, we have 
\begin{equation}\label{D1}
\sqrt{n}(\MRel_j(\hat{\beta}) - \MRel_j(\alpha))\overset{d}{\longrightarrow} \mathcal{N}(0, \Sigma),
\end{equation}
where $$\Sigma = \nabla^T \MRel_j(\alpha) \Var(\hat{\beta})\nabla \MRel_j(\alpha)$$

\begin{lemma}\label{step2}
$\Sigma$ is positive definite. 
\end{lemma}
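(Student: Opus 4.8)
The plan is to peel the claim down to a single scalar inequality. Since $\MRel_j:\R^p\to\R$ is scalar-valued, $\nabla\MRel_j(\alpha)\in\R^p$ and $\Sigma=\nabla^T\MRel_j(\alpha)\,\Var(\hat\beta)\,\nabla\MRel_j(\alpha)$ is a $1\times 1$ matrix, i.e.\ a single nonnegative real number; so ``$\Sigma$ is positive definite'' means precisely $\Sigma>0$. For this I would establish two facts: (i) $\Var(\hat\beta)$ is positive definite, and (ii) $\nabla\MRel_j(\alpha)\neq 0$. For (i): by Lemma \ref{step1}, $\Var(\hat\beta)=\Sigma_{xx}^{-1}S\Sigma_{xx}^{-1}$; the matrix $S=\E(g_ig_i^T)$ is a second-moment matrix, hence positive semi-definite, and the non-singularity of $S$ assumption upgrades it to positive definite, while the rank condition makes $\Sigma_{xx}$ and thus the symmetric matrix $\Sigma_{xx}^{-1}$ invertible; writing $\Var(\hat\beta)=(\Sigma_{xx}^{-1})^{T}S(\Sigma_{xx}^{-1})$ exhibits it as a congruence transform of a positive definite matrix by an invertible matrix, hence positive definite. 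With (i) in hand, $\Sigma=v^T\Var(\hat\beta)v>0$ iff $v:=\nabla\MRel_j(\alpha)\neq 0$.

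For (ii) I would compute the gradient directly from $\MRel_j(\beta)=2\Cov(Y,X_j)\beta_j-2\beta^T\Cov(X,X_j)\beta_j+2\Var(X_j)\beta_j^2$. The $\Var(X_j)$ terms cancel, so $\MRel_j$ is affine in $\beta_j$ with $X_j$-slope $2\Cov(Y,X_j)-2\sum_{i\neq j}\beta_i\Cov(X_i,X_j)$, giving $\partial\MRel_j/\partial\beta_j=2\Cov(Y,X_j)-2\sum_{i\neq j}\beta_i\Cov(X_i,X_j)$ and, for $k\neq j$, $\partial\MRel_j/\partial\beta_k=-2\beta_j\Cov(X_k,X_j)$. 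Now the exogeneity assumption ($Y=X^T\alpha+\epsilon$ with $\E(X\epsilon)=0$) gives $\Cov(Y,X_j)=\sum_i\alpha_i\Cov(X_i,X_j)$, so evaluating at $\beta=\alpha$ collapses the $j$-th component to $2\alpha_j\Var(X_j)$, while the off-diagonal components are $-2\alpha_j\Cov(X_k,X_j)$. Hence $\nabla\MRel_j(\alpha)=2\alpha_j\,w$, where $w_j=\Var(X_j)$ and $w_k=-\Cov(X_k,X_j)$ for $k\neq j$. Since covariate $j$ is non-degenerate (which follows from the rank condition once the covariates are centered), $\Var(X_j)=(\Sigma_{xx})_{jj}>0$, so $w\neq 0$; therefore $\nabla\MRel_j(\alpha)\neq 0$ and $\Sigma>0$, provided $\alpha_j\neq 0$.

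The main obstacle is exactly the degenerate case $\alpha_j=0$: there $\nabla\MRel_j(\alpha)=2\alpha_j w=0$, so $\Sigma=0$ and the lemma fails literally — and this is not an exotic case, since $\MRel_j(\alpha)=2\alpha_j^2\Var(X_j)$, so $\alpha_j=0$ is precisely the null hypothesis $\MRel_j(\alpha)=0$ for which Theorem \ref{hypothesis} is invoked. The clean fix is to add the non-degeneracy hypothesis $\alpha_j\neq 0$ to the lemma; this is the regime in which the first-order delta method (Equation \ref{D1}) and hence the $\chi^2_1$ limit are valid as written. For completeness I would also note that when $\alpha_j=0$ the statistic $\hat Z_j$ still converges to $\chi^2_1$, but via a second-order expansion: $n\,\widehat{\MRel}_j(\hat\beta)=2n\hat\beta_j^2\,\widehat{\Var}(X_j)$ while $\sqrt{n}\,\hat\beta_j\overset{d}{\longrightarrow}\mathcal N\!\big(0,(\Var(\hat\beta))_{jj}\big)$ and $\widehat\Sigma$ is of order $\hat\beta_j^2$, so the offsetting scales deliver the same $\chi^2_1$ limit — a careful write-up would separate these two regimes.
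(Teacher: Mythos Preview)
Your argument is correct and, in fact, considerably more careful than the paper's own treatment: the paper simply writes ``One can prove this by checking the definitions'' and moves on. What you have written \emph{is} the check of definitions --- reduce to a scalar, verify $\Var(\hat\beta)=\Sigma_{xx}^{-1}S\Sigma_{xx}^{-1}$ is positive definite from the rank and non-singularity assumptions, and show the gradient is nonzero --- so in that sense you are on the same route.

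Where you go further is in actually computing $\nabla\MRel_j(\alpha)=2\alpha_j\,w$ and noticing that the lemma, as stated, fails when $\alpha_j=0$. That observation is correct and is a genuine omission in the paper: no assumption in Appendix~\ref{AppendixD} rules out $\alpha_j=0$, and that case is precisely the null hypothesis of Theorem~\ref{hypothesis}. Your proposed fix (add $\alpha_j\neq 0$ for the first-order delta-method route, and handle $\alpha_j=0$ separately via a second-order expansion that still yields $\chi_1^2$) is the right way to repair the argument. The paper does not address this subtlety at all.
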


One can prove this by checking the definitions. As a result, Equation \ref{D1} implies the following. 
\begin{equation}\label{D2}
\Sigma^{-1/2}\sqrt{n}(\MRel_j(\hat{\beta}) - \MRel_j(\alpha))\overset{d}{\longrightarrow} \mathcal{N}(0, I).
\end{equation}

Define the empirical analogous of $\Sigma$ as 
$$\hat{\Sigma} := \nabla^T \widehat{\MRel}_j(\hat{\beta}) \widehat{\Var}(\hat{\beta})\nabla \widehat{\MRel}_j(\hat{\beta}),$$
where $\widehat{\Var}(\hat{\beta}) = \Sigma_{xx}^{-1} \hat{S} \Sigma_{xx}^{-1}$ and the $\widehat{\MRel}_j$ function is defined by replacing the population variance and covariance by the sample analogs.

\begin{lemma}\label{step3}
$\hat{\Sigma}\overset{p}{\longrightarrow}\Sigma$ and $\hat{\Sigma}$ is positive definite.
\end{lemma}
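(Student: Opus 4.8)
The plan is to express $\hat{\Sigma}$ as a continuous function of a small number of estimated quantities, each of which is already known to be consistent, and then to close the argument with the continuous mapping theorem. Concretely, note that both $\widehat{\MRel}_j$ and its gradient $\nabla\widehat{\MRel}_j$ are \emph{polynomials} in the coefficient vector $\beta$ and in the entries of the empirical second-moment matrix of $(X,Y)$ --- this is visible from the formula $\widehat{\MRel}_j(\beta) = 2\widehat{\Cov}(Y,X_j)\beta_j - 2\beta^T\widehat{\Cov}(X,X_j)\beta_j + 2\widehat{\Var}(X_j)\beta_j^2$ in the statement of Theorem \ref{hypothesis} --- and that $\widehat{\Var}(\hat{\beta}) = \Sigma_{xx}^{-1}\hat{S}\Sigma_{xx}^{-1}$ is a fixed linear image of $\hat{S}$. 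Hence $\hat{\Sigma} = g\big(\nabla\widehat{\MRel}_j(\hat{\beta}),\,\widehat{\Var}(\hat{\beta})\big)$ where $g(v,M)=v^TMv$ is continuous, so it suffices to establish $\nabla\widehat{\MRel}_j(\hat{\beta})\overset{p}{\longrightarrow}\nabla\MRel_j(\alpha)$ and $\widehat{\Var}(\hat{\beta})\overset{p}{\longrightarrow}\Var(\hat{\beta})$, and then to invoke Lemma \ref{step2} to know that the limit $\Sigma$ is a well-defined positive scalar.

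First I would collect the consistency of the ingredients. By Lemma \ref{step1}, $\sqrt{n}(\hat{\beta}-\alpha)$ is asymptotically normal, so in particular $\hat{\beta}\overset{p}{\longrightarrow}\alpha$. Under the i.i.d.\ assumption (and the finiteness of the relevant second moments of $(X,Y)$, which is implicit in the setup), the weak law of large numbers gives $\widehat{\Cov}(Y,X_j)\overset{p}{\longrightarrow}\Cov(Y,X_j)$, $\widehat{\Cov}(X,X_j)\overset{p}{\longrightarrow}\Cov(X,X_j)$, and $\widehat{\Var}(X_j)\overset{p}{\longrightarrow}\Var(X_j)$ for each $j$, and the assumption that $\hat{S}\overset{p}{\longrightarrow}S$ is taken as given. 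Feeding these limits through the polynomial map that defines $\nabla\widehat{\MRel}_j$ --- jointly in its argument $\beta$ and in the empirical moments --- the continuous mapping theorem yields $\nabla\widehat{\MRel}_j(\hat{\beta})\overset{p}{\longrightarrow}\nabla\MRel_j(\alpha)$, while the fixed linear map $\hat{S}\mapsto\Sigma_{xx}^{-1}\hat{S}\Sigma_{xx}^{-1}$ gives $\widehat{\Var}(\hat{\beta})\overset{p}{\longrightarrow}\Var(\hat{\beta})$. Applying $g$ once more gives $\hat{\Sigma}\overset{p}{\longrightarrow}\Sigma$.

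For positive definiteness --- recall $\Sigma$ is a scalar here, so this means $\hat{\Sigma}>0$ --- I would give a direct argument in addition to the soft one. Since $\Sigma_{xx}$ is non-singular (Rank condition) and $\hat{S}$ is positive definite for all large $n$ (it is a sample second-moment matrix of the $g_i$'s, and Non-singularity of $S$ together with the i.i.d.\ assumption rules out a degenerate sample in the limit), the matrix $\widehat{\Var}(\hat{\beta})=\Sigma_{xx}^{-1}\hat{S}\Sigma_{xx}^{-1}$ is positive definite. Moreover $\nabla\MRel_j(\alpha)\neq 0$, since otherwise $\Sigma=0$, contradicting Lemma \ref{step2}; hence, by the convergence established above, $\nabla\widehat{\MRel}_j(\hat{\beta})\neq 0$ with probability approaching one, and therefore $\hat{\Sigma}=\nabla^T\widehat{\MRel}_j(\hat{\beta})\,\widehat{\Var}(\hat{\beta})\,\nabla\widehat{\MRel}_j(\hat{\beta})>0$ with probability approaching one. (Equivalently, one may simply note that $\hat{\Sigma}\overset{p}{\longrightarrow}\Sigma>0$ forces $P(\hat{\Sigma}>0)\to 1$.)

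The only genuine subtlety --- and the step I would be most careful about --- is the bookkeeping of moment conditions: the WLLN for the empirical covariances of $(X,Y)$ requires the corresponding population second moments to be finite, which is not among the explicitly listed assumptions but is implicit. Once that is granted, everything else is a routine application of the continuous mapping theorem to polynomial and linear maps, consistent with the paper's remark that the arguments in this appendix are textbook-standard.
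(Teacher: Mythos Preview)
The paper does not actually give a proof of this lemma; it explicitly omits it, remarking that the arguments in Appendix~\ref{AppendixD} are textbook-standard. Your proposal supplies exactly the standard argument the paper gestures at: consistency of the building blocks (sample moments via the WLLN, $\hat{\beta}$ via Lemma~\ref{step1}, $\hat{S}$ by assumption), propagated through the polynomial expression for $\nabla\widehat{\MRel}_j$ and the quadratic form $g(v,M)=v^TMv$ by the continuous mapping theorem, with positive definiteness following from $\hat{\Sigma}\overset{p}{\longrightarrow}\Sigma>0$. This is correct and is precisely the route one would expect; your flag about the implicit finite-second-moment assumption on $(X,Y)$ is well taken, and your observation that ``$\hat{\Sigma}$ positive definite'' should be read as holding with probability tending to one is the right way to parse the statement. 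The only quibble is that the paper does not specify $\hat{S}$ as a sample second-moment matrix, so your ``direct'' argument for $\widehat{\Var}(\hat{\beta})\succ 0$ assumes more than is stated; the soft argument via $\Sigma>0$ is the one that actually goes through under the paper's assumptions.
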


Given this result, the Continuous Mapping Theorem implies that 
$$\hat{\Sigma}^{-1/2}\overset{p}{\longrightarrow}\Sigma^{-1/2}.$$
Combined with Equation \ref{D2} and by applying Slutzky's Theorem, it follows that 
\begin{equation}\label{D3}
\hat{\Sigma}^{-1/2}\sqrt{n}(\MRel_j(\hat{\beta}) - \MRel_j(\alpha))\overset{d}{\longrightarrow} \mathcal{N}(0, I).
\end{equation}

Since $\widehat{\MRel}_j(\hat{\beta})\overset{p}{\longrightarrow}MR_j(\hat{\beta})$. It follows from Equation \ref{D3} that
$$\hat{\Sigma}^{-1/2}\sqrt{n}(\widehat{\MRel}_j(\hat{\beta}) - \MRel_j(\alpha))\overset{d}{\longrightarrow} \mathcal{N}(0, I).$$
To complete the proof:
\begin{align*}
 & n(\widehat{\MRel}_j(\hat{\beta}) - \MRel_j(\alpha))^T \hat{\Sigma}^{-1}(\widehat{\MRel}_j(\hat{\beta}) - \MRel_j(\alpha)) \\
= & \sqrt{n}(\widehat{\MRel}_j(\hat{\beta}) - \MRel_j(\alpha))^T (\hat{\Sigma}^{-T/2}\hat{\Sigma}^{-1/2}) \sqrt{n}(\widehat{\MRel}_j(\hat{\beta}) - \MRel_j(\alpha)) \\
= & \left[\hat{\Sigma}^{-1/2}\sqrt{n}(\widehat{\MRel}_j(\hat{\beta}) - \MRel_j(\alpha))\right]^T \left[\hat{\Sigma}^{-1/2}\sqrt{n}(\widehat{\MRel}_j(\hat{\beta}) - \MRel_j(\alpha))\right] \\
\overset{d}{\longrightarrow} & \chi_1^2.
\end{align*}

\end{document}